\def\eqref#1{equation~\ref{#1}}
\def\1{\bm{1}}
\def\va{{\bm{a}}}
\DeclareMathAlphabet{\mathsfit}{\encodingdefault}{\sfdefault}{m}{sl}
\SetMathAlphabet{\mathsfit}{bold}{\encodingdefault}{\sfdefault}{bx}{n}
\newcommand{\Var}{\mathrm{Var}}
\declaretheorem[name=Lemma]{lemma}
\newcommand{\shortn}{\textup{\texttt{-}}}
\newcommand{\shorte}{\textup{\texttt{=}}}
\newcommand{\shortp}{\textup{\texttt{+}}}
\newcommand{\ie}{\textit{i}.\textit{e}.}
\newcommand{\name}{DOP}
\newcommand{\Tau}{\mathrm{T}}
\title{DOP: Off-Policy Multi-Agent Decomposed \\ Policy Gradients}
\newcommand{\printfnsymbol}[1]{%
  \textsuperscript{\@fnsymbol{#1}}%
}
\author{%
  Yihan Wang\thanks{Equal Contribution. Listing order is random.} \ , Beining Han\printfnsymbol{1}, Tonghan Wang\printfnsymbol{1}, Heng Dong, Chongjie Zhang \\
  Institute for Interdisciplinary Information Sciences\\
  Tsinghua University, Beijing, China\\
  \texttt{\{memoryslices,bouldinghan,tonghanwang1996,drdhii\}@gmail.com} \\
  \texttt{chongjie@tsinghua.edu.cn}
}
\begin{document}

\maketitle
\renewcommand\UrlFont{\rmfamily\itshape}
\begin{abstract}
Multi-agent policy gradient (MAPG) methods recently witness vigorous progress. However, there is a significant performance discrepancy between MAPG methods and state-of-the-art multi-agent value-based approaches. In this paper, we investigate causes that hinder the performance of MAPG algorithms and present a multi-agent decomposed policy gradient method (\name). This method introduces the idea of value function decomposition into the multi-agent actor-critic framework. Based on this idea, \name~supports efficient off-policy learning and addresses the issue of \emph{centralized-decentralized mismatch} and credit assignment in both discrete and continuous action spaces. We formally show that \name~critics have sufficient representational capability to guarantee convergence. In addition, empirical evaluations on the StarCraft II micromanagement benchmark and multi-agent particle environments demonstrate that \name~significantly outperforms both state-of-the-art value-based and policy-based multi-agent reinforcement learning algorithms. Demonstrative videos are available at \url{https://sites.google.com/view/dop-mapg/}.
\end{abstract}

\section{Introduction}
Cooperative multi-agent reinforcement learning (MARL) has achieved great progress in recent years~\citep{hughes2018inequity, jaques2019social, vinyals2019grandmaster, zhang2019multi, baker2020emergent, wang2020roma}. Advances in valued-based MARL~\citep{sunehag2018value, rashid2018qmix, son2019qtran, wang2020learning} contribute significantly to the progress, achieving state-of-the-art performance on challenging tasks, such as StarCraft II micromanagement~\citep{samvelyan2019starcraft}. However, these value-based methods present a major challenge for stability and convergence in multi-agent settings~\citep{wang2020understanding}, which is further exacerbated in continuous action spaces. Policy gradient methods hold great promise to resolve these challenges. MADDPG~\citep{lowe2017multi} and COMA~\citep{foerster2018counterfactual} are two representative methods that adopt the paradigm of centralized critic with decentralized actors (CCDA), which not only deals with the issue of non-stationarity~\citep{foerster2017stabilising, hernandezleal2017survey} by conditioning the centralized critic on global history and actions but also maintains scalable decentralized execution via conditioning policies on local history. Several subsequent works make improvements to the CCDA framework by introducing the mechanism of recursive reasoning~\citep{wen2019probabilistic} or attention~\citep{iqbal2019actor}. 

Despite the progress, most of the multi-agent policy gradient (MAPG) methods do not provide satisfying performance, e.g., significantly underperforming value-based methods on benchmark tasks~\citep{samvelyan2019starcraft}. In this paper, we analyze this discrepancy and pinpoint three major issues that hinder the performance of MAPG methods. (1) Current stochastic MAPG methods do not support off-policy learning, partly because using common off-policy learning techniques is computationally expensive in multi-agent settings. (2) In the CCDA paradigm, the suboptimality of one agent's policy can propagate through the centralized joint critic and negatively affect policy learning of other agents, causing catastrophic miscoordination, which we call \emph{centralized-decentralized mismatch} (CDM). (3) For deterministic MAPG methods, realizing efficient credit assignment~\citep{tumer2002learning, agogino2004unifying} with a single global reward signal largely remains challenging. 

In this paper, we find that these problems can be addressed by introducing the idea of value decomposition into the multi-agent actor-critic framework and learning a centralized but factorized critic. This framework decomposes the centralized critic as a weighted linear summation of individual critics that condition on local actions. This decomposition structure not only enables scalable learning on the critic, but also brings several benefits. It enables tractable off-policy evaluations of stochastic policies, attenuates the CDM issues, and also implicitly learns an efficient multi-agent credit assignment. Based on this decomposition, we develop efficient off-policy multi-agent stochastic policy gradient methods for both discrete and continuous action spaces.

A drawback of an linearly decomposed critic is its limited representational capacity~\citep{wang2020qplex}, which may induce bias in value estimations. However, we show that this bias does not violate the policy improvement guarantee of policy gradient methods and that using decomposed critics can largely reduce the variance in policy updates. In this way, a decomposed critic achieves a great bias-variance trade-off. 


We evaluate our methods on both the StarCraft II micromanagement benchmark~\citep{samvelyan2019starcraft} (discrete action spaces) and multi-agent particle environments~\citep{lowe2017multi, mordatch2018emergence} (continuous action spaces). Empirical results show that \name~is very stable across different runs and outperforms other MAPG algorithms by a wide margin. Moreover, to our best knowledge, stochastic \name~provides the first MAPG method that significantly outperforms state-of-the-art valued-based methods in discrete-action benchmark tasks.

\textbf{Related works on value decomposition methods.} In value-based MARL, value decomposition~\citep{guestrin2002coordinated, castellini2019representational} is widely used. These methods learn local Q-value functions for each agent, which are combined with a learnable mixing function to produce global action values. In VDN~\citep{sunehag2018value}, the mixing function is an arithmetic summation. QMIX~\citep{rashid2018qmix, rashid2020monotonic} proposes a non-linear monotonic factorization structure. QTRAN~\citep{son2019qtran} and QPLEX~\citep{wang2020qplex} further extend the class of value functions that can be represented. NDQ~\citep{wang2020learning} addresses the miscoordination problem by learning nearly decomposable architectures. In this paper, we study how value decomposition can be used to enable efficient multi-agent policy-based learning. In Appendix F, we discuss how \name~is related to recent progress in multi-agent reinforcement learning and provide detailed comparisons with existing multi-agent policy gradient methods.


\section{Background}\label{sec:background}
We consider fully cooperative multi-agent tasks that can be modelled as a Dec-POMDP~\citep{oliehoek2016concise} $G\shorte\langle I, S, A, P, R, \Omega, O, n, \gamma\rangle$, where $I$ is the finite set of agents, $\gamma\in[0, 1)$ is the discount factor, and $s\in S$ is the true state of the environment. At each timestep, each agent $i$ receives an observation $o_i\in \Omega$ drawn according to the observation function $O(s, i)$ and selects an action $a_i\in A$, forming a joint action $\va\in A^n$, leading to a next state $s'$ according to the transition function $P(s'|s, \va)$ and a reward $r=R(s,\va)$ shared by all agents. Each agent learns a policy $\pi_i(a_i | \tau_i; \theta_i)$, which is parameterized by $\theta_i$ and conditioned on the local history $\tau_i\in \Tau\equiv(\Omega\times A)^*$. The joint policy $\bm{\pi}$, with parameters $\theta = \langle\theta_1, \cdots, \theta_n\rangle$, induces a joint action-value function: $Q_{tot}^{\bm{\pi}}(\bm\tau$,$ \va)$=$\mathbb{E}_{s_{0:\infty},\va_{0:\infty}}[\sum_{t=0}^\infty \gamma^{t}R(s_t, \va_t)|$ $s_0\shorte s,\va_0\shorte \va,\bm{\pi}]$. We consider both discrete and continuous action spaces, for which stochastic and deterministic policies are learned, respectively. To distinguish deterministic policies, we denote them by $\bm\mu=\langle\mu_1, \cdots, \mu_n\rangle$.

\textbf{Multi-Agent Policy Gradients} The \emph{centralized training with decentralized execution} (CTDE) paradigm~\citep{foerster2016learning, wang2020influence} has recently attracted attention for its ability to address non-stationarity while maintaining decentralized execution. Learning a centralized critic with decentralized actors (CCDA) is an efficient approach that exploits the CTDE paradigm. MADDPG and COMA are two representative examples. MADDPG~\citep{lowe2017multi} learns deterministic policies in continuous action spaces and uses the following gradients to update policies:
\begin{equation}\label{equ:maddpg}
    g = \mathbb{E}_{\bm{\tau},\va\sim\mathcal{D}}\left[\sum_i \nabla_{\theta_i} \mu_i(\tau_i)\nabla_{a_i}Q_{tot}^{\bm{\mu}}(\bm\tau,\va)|_{a_i=\mu_i(\tau_i)}\right],
\end{equation}
and $\mathcal{D}$ is a replay buffer. COMA~\citep{foerster2018counterfactual} updates stochastic policies using the gradients:
\begin{equation}\label{equ:coma}
    g = \mathbb{E}_{\bm{\pi}}\left[\sum_i\nabla_{\theta_i}\log \pi_i(a_i|\tau_i)A_i^{\bm\pi}(\bm\tau,\va)\right],
\end{equation}
where $A^{\bm\pi}_i(\bm \tau,\va) = Q^{\bm{\pi}}_{tot}(\bm \tau,\va)-\sum_{a_i'}Q^{\bm{\pi}}_{tot}(\bm \tau,(\va_{\shortn i}, a_i'))$ is a counterfactual advantage ($\va_{\shortn i}$ is the joint action other than agent $i$) that deals with the issue of credit assignment and reduces variance.

\section{Analysis}\label{sec:analysis}
In this section, we investigate challenges that limit the performance of state-of-the-art multi-agent policy gradient methods.
\subsection{Off-Policy Learning for Multi-Agent Stochastic Policy Gradients}\label{sec:ana-off}
Efficient stochastic policy learning in single-agent settings relies heavily on using off-policy data~\citep{lillicrap2015continuous, wang2016sample, fujimoto2018addressing, haarnoja2018soft}, which is not supported by existing stochastic MAPG methods~\citep{foerster2018counterfactual}. In the CCDA framework, off-policy policy evaluation---estimating $Q_{tot}^{\bm\pi}$ from data drawn from behavior policies $\bm\beta=\langle\beta_1,\dots,\beta_n\rangle$---encounters major challenges. Importance sampling~\citep{meuleau2000off, jie2010connection, levine2013guided} is a simple way to correct for the discrepancy between $\bm\pi$ and $\bm\beta$, but, it requires computing $\prod_i \frac{\pi_i(a_i|\tau_i)}{\beta_i(a_i|\tau_i)}$, whose variance grows exponentially with the number of agents in multi-agent settings. An alternative is to extend the tree backup technique~\citep{precup2000eligibility, munos2016safe} to multi-agent settings and use the $k$-step tree backup update target for training the critic:
\begin{equation}\label{equ:ma_tb}
    y^{TB} = Q_{tot}^{\bm\pi}(\bm\tau,\va) + \sum_{t=0}^{k-1} \gamma^{t} \left(\prod_{l=1}^{t}\lambda\bm\pi(\va_l|\bm\tau_l)\right)\left[r_{t}+\gamma \mathbb{E}_{\bm\pi} [Q^{\bm\pi}_{tot}(\bm\tau_{t+1},\cdot)]-Q_{tot}^{\bm\pi}(\bm\tau_{t}, \va_{t})\right],
\end{equation}
where $\bm\tau_0=\bm\tau$, $\va_0=\va$. However, the complexity of computing $\mathbb{E}_{\bm\pi} [Q^{\bm\pi}_{tot}(\bm\tau_{t+1},\cdot)]$ is $O(|A|^n)$, which becomes intractable when the number of agents is large. Therefore, it is challenging to develop off-policy stochastic MAPG methods. 

\subsection{The Centralized-Decentralized Mismatch Issue}\label{sec:ana-CDM}
In the centralized critic with decentralized actors (CCDA) framework, agents learn individual policies, $\pi_i(a_i | \tau_i ; \theta_i)$, conditioned on the local observation-action history. However, the gradients for updating these policies are dependent on the centralized joint critic, $Q^{\bm\pi}_{tot}(\bm\tau, \va)$ (see Eq.~\ref{equ:maddpg} and~\ref{equ:coma}), which introduces the influence of actions of other agents. Intuitively, gradient updates will move an agent in the direction that can increase the global Q value, but the presence of other agents' actions incurs large variance in the estimates of such directions. 

Formally, the variance of policy gradients for agent $i$ at $(\tau_i, a_i)$ is dependent on other agents' actions:
\begin{equation}
\begin{aligned}
  & \text{Var}_{\va_{\shortn i}\sim\bm\pi_{\shortn i}}\left[Q^{\bm\pi}_{tot}(\bm\tau, (a_i, \va_{\shortn i}))\nabla_{\theta_i}\log \pi_i(a_i|\tau_i)\right] \\
= & \text{Var}_{\va_{\shortn i}\sim\bm\pi_{\shortn i}}\left[Q^{\bm\pi}_{tot}(\bm\tau, (a_i, \va_{\shortn i}))\right](\nabla_{\theta_i}\log \pi_i(a_i|\tau_i))(\nabla_{\theta_i}\log \pi_i(a_i|\tau_i))^\Tau,
\end{aligned}
\end{equation}
where $\text{Var}_{\va_{\shortn i}}\left[Q^{\bm\pi}_{tot}(\bm\tau, (a_i, \va_{\shortn i}))\right]$ can be very large due to the exploration or suboptimality of other agents' policies, which may cause suboptimality in individual policies. For example, suppose that the optimal joint action under $\bm\tau$ is $\va^*$= $\langle a_1^*, \dots, a_n^*\rangle$. When $\mathbb{E}_{\va_{\shortn i}\sim\bm\pi_{\shortn i}}[Q^{\bm\pi}_{tot}(\bm\tau, (a_i^*, \va_{\shortn i}))]<0$ , $\pi_i(a_i^*|\tau_i)$ will decrease, possibly resulting in a suboptimal $\pi_i$. This becomes problematic because a negative feedback loop is created, in which the joint critic is affected by the suboptimality of agent $i$, which disturbs policy updates of other agents. We call this issue \emph{centralized-decentralized mismatch} (CDM).

\textbf{Does CDM occur in practice for state-of-the-art algorithms?} To answer this question, we carry out a case study in Sec.~\ref{sec:exp-cdm}. We can see that the variance of \name~gradients is significantly smaller than COMA and MADDPG (Fig.~\ref{fig:lc_tradeoff} left). This smaller variance enables \name~to outperform other algorithms (Fig.~\ref{fig:lc_tradeoff} middle). We will explain this didactic example in detail in Sec.~\ref{sec:exp-cdm}. In Sec.~\ref{sec:exp-sc2} and~\ref{sec:exp-mpe}, we further show that CDM is exacerbated in sequential decision-making settings, causing divergence even after a near-optimal strategy has been learned.


\subsection{Credit Assignment for Multi-Agent Deterministic Policy Gradients}
MADDPG~\citep{lowe2017multi} and MAAC~\citep{iqbal2019actor} extend deterministic policy gradient algorithms~\citep{silver2014deterministic, lillicrap2015continuous} to multi-agent settings, enabling efficient off-policy learning in continuous action spaces. However, they leave the issue of credit assignment~\citep{tumer2002learning, agogino2004unifying} largely untouched in fully cooperative settings, where agents learn policies from a single global reward signal. In stochastic cases, COMA assigns credits by designing a counterfactual baseline (Eq.~\ref{equ:coma}). However, it is not straightforward to extend COMA to deterministic policies, since the output of polices is no longer a probability distribution. As a result, it remains challenging to realize efficient credit assignment in deterministic cases. 
\section{Decomposed Off-Policy Policy Gradients}
To address the limitations of existing MAPG methods discussed in Sec.~\ref{sec:analysis}, we introduce the idea of value decomposition into the multi-agent actor-critic framework and propose a \emph{Decomposed Off-Policy policy gradient} (\name) method. We factor the centralized critic as a weighted summation of individual critics across agents:
\begin{equation}\label{equ:dc}
    Q^{\phi}_{tot}(\bm \tau, \mathbf{a}) = {\textstyle\sum}_i k_i(\bm \tau) Q_i^{\phi_i}(\bm\tau, a_i) + b(\bm \tau),
\end{equation}
where $\phi$ and $\phi_i$ are parameters of the global and local Q functions, respectively, and $k_i \ge 0$ and $b$ are generated by learnable networks whose inputs are global observation-action histories. In the following sections, we show that this linear decomposition helps address existing limitations of previous methods. A concern is the limited expressivity of linear decomposition~\citep{wang2020qplex}, which may introduce bias in value estimations. We will show that this limitation does not violate the policy improvement guarantee of \name.


\begin{wrapfigure}{r}{0.36\linewidth}
    \centering
    \includegraphics[width=\linewidth]{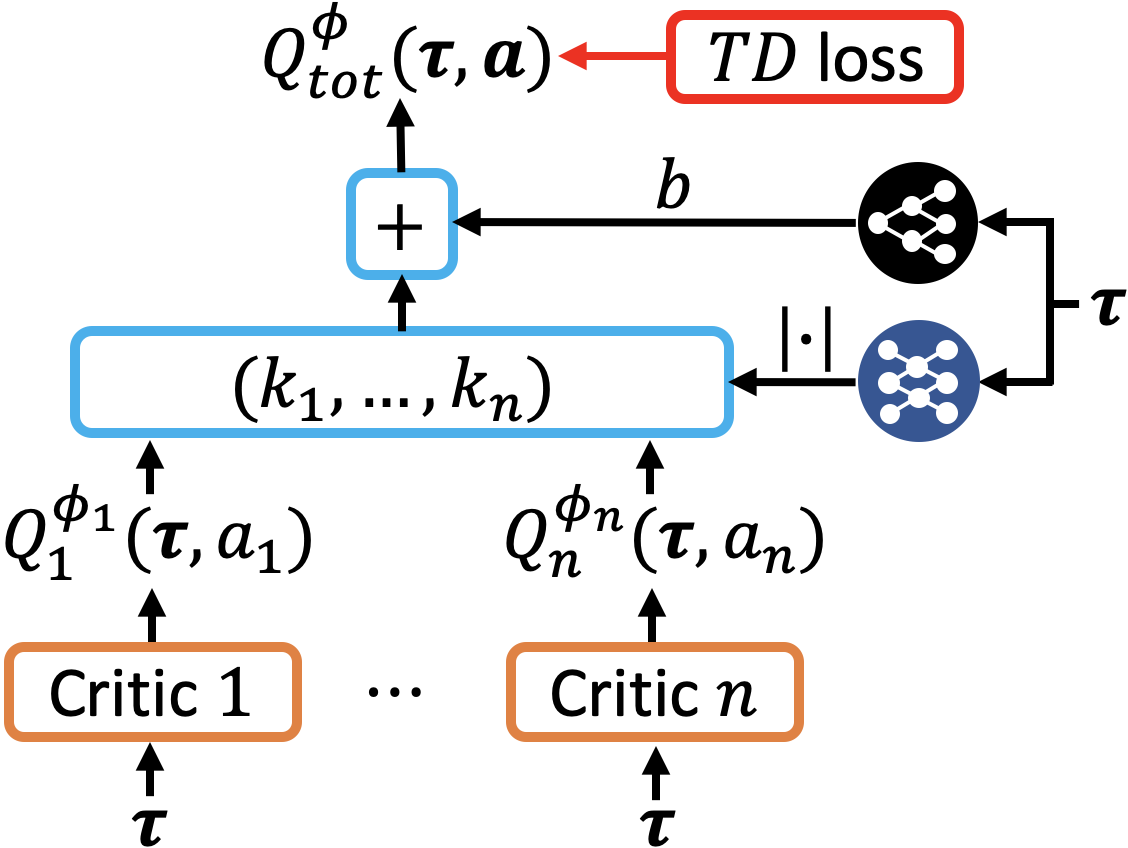}
    \caption{A \textsc{Decomposed} critic.}
    \label{fig:critic}
\end{wrapfigure}
Fig.~\ref{fig:critic} shows the architecture for learning decomposed critics. We learn individual critics $Q_i^{\phi_i}$ by backpropagating gradients from global TD updates dependent on the joint global reward, i.e., $Q_i^{\phi_i}$ is learned implicitly rather than from any reward specific to agent $i$. We enforce $k_i \ge 0$ by applying an absolute activation function at the last layer of the network. The network structure is described in detail in Appendix~\ref{appx:architecture}.


Based on the critic decomposition learning, the following sections will introduce decomposed off-policy policy gradients for learning stochastic policies and deterministic policies, respectively. Similar to other actor-critic methods, \name~alternates between \emph{policy evaluation}---estimating the value function for a policy---and \emph{policy improvement}---using the value function to update the policy~\citep{barto1983neuronlike}.  

\subsection{Stochastic Decomposed Off-Policy Policy Gradients}\label{sec:s_dop}
For learning stochastic policies, the linearly decomposed critic plays an essential role in enabling tractable multi-agent tree backup for off-policy policy evaluation and attenuating the CDM issue while maintaining provable policy improvement.
\subsubsection{Off-Policy Learning}\label{sec:s_dop_tb}
\textbf{Policy Evaluation: Train the Critic} As discussed in Sec.~\ref{sec:ana-off}, using tree backup (Eq.~\ref{equ:ma_tb}) to carry out multi-agent off-policy policy evaluation requires calculating $\mathbb{E}_{\bm\pi} [Q^{\phi}_{tot}(\bm\tau_{t+1},\cdot)]$, which needs $O(|A|^n)$ steps of summation when a joint critic is used. Fortunately, using the linearly decomposed critic, \name~reduces the complexity of computing this expectation to $O(n|A|)$:
\begin{equation}
    \mathbb{E}_{\bm\pi}[Q_{tot}^\phi(\bm\tau, \cdot)] = {\textstyle\sum}_i k_i(\bm\tau)\mathbb{E}_{\pi_i}[Q_i^{\phi_i}(\bm\tau, \cdot)] + b(\bm\tau),
\end{equation}
making the tree backup technique tractable (detailed proof can be found in Appendix A.1). Another challenge of using multi-agent tree backup (Eq.~\ref{equ:ma_tb}) is that the coefficient $c_t=\prod_{l=1}^{t}\lambda\bm\pi(\va_l|\bm\tau_l)$ decays as $t$ gets larger, which may lead to relatively lower training efficiency. To solve this issue, we propose to mix off-policy tree backup updates with on-policy $TD(\lambda)$ updates to trade off sample efficiency and training efficiency. Formally, \name~minimizes the following loss for training the critic:
\begin{equation}\label{equ:s_dop_ct}
    \mathcal{L}(\phi) = \kappa \mathcal{L}_{\bm\beta}^{\text{\name\shortn TB}}(\phi) + (1-\kappa)\mathcal{L}_{\bm\pi}^{\text{On}}(\phi)
\end{equation}
where $\kappa$ is a scaling factor, $\bm\beta$ is the joint behavior policy, and $\phi$ is the parameters of the critic. The first loss item is $\mathcal{L}_{\bm\beta}^{\text{\name\shortn TB}}(\phi) = \mathbb{E}_{\bm\beta} [(y^{\text{\name\shortn TB}} - Q_{tot}^{\phi}(\bm\tau,\va))^2]$, where $y^{\text{\name\shortn TB}}$ is the update target of the proposed $k$-step decomposed multi-agent tree backup algorithm:
\begin{equation}
    y^{\text{\name\shortn TB}} = Q_{tot}^{\phi'}(\bm\tau,\va) + \sum_{t\shorte 0}^{k\shortn 1} \gamma^{t} c_t\left[r_{t}+\gamma \sum_i k_i(\bm\tau_{t\shortp 1})\mathbb{E}_{\pi_i}[Q_i^{\phi_i'}(\bm\tau_{t\shortp 1}, \cdot)] + b(\bm\tau_{t\shortp 1})-Q_{tot}^{\phi'}(\bm\tau_{t}, \va_{t})\right].
\end{equation}
Here, $\phi'$ is the parameters of a target critic, and $\va_t\sim \bm\beta(\cdot|\bm\tau_t)$. The second loss item is $\mathcal{L}_{\bm\pi}^{\text{On}}(\phi) = \mathbb{E}_{\bm\pi} [(y^{\text{On}} - Q_{tot}^{\phi}(\bm\tau,\va))^2]$, where $y^{\text{On}}$ is the on-policy update target as in $\textit{TD}(\lambda)$:
\begin{equation}
    y^{\text{On}} = Q_{tot}^{\phi'}(\bm\tau,\va) + \sum_{t=0}^{\infty} (\gamma\lambda)^{t} \left[r_{t}+\gamma Q_{tot}^{\phi'}(\bm\tau_{t+1}, \va_{t+1})-Q_{tot}^{\phi'}(\bm\tau_{t}, \va_{t})\right].
\end{equation}
In practice, we use two buffers, an on-policy buffer for computing $\mathcal{L}_{\bm\pi}^{\text{On}}(\phi)$ and an off-policy buffer for estimating $\mathcal{L}_{\bm\beta}^{\text{\name\shortn TB}}(\phi)$. 

\textbf{Policy Improvement: Train Actors} Using the linearly decomposed critic architecture, we can derive the following on-policy policy gradients for learning stochastic policies:
\begin{equation}\label{equ:s_dop_g}
    g=\mathbb{E}_{\bm\pi}\left[{\textstyle\sum}_i k_i(\bm\tau) \nabla_{\theta_{i}}\log{\pi_i(a_{i}|\tau_{i}; \theta_i)}Q^{\phi_i}_{i}(\bm\tau,a_{i})\right].
\end{equation}
In Appendix~\ref{appx:stochastic_pg}, we provide the detailed derivation and an off-policy version of stochastic policy gradients. This update rule reveals two important insights. (1) With a linearly decomposed critic, each agent's policy update only depends on the individual critic $Q^{\phi_i}_{i}$. (2) Learning the decomposed critic implicitly realizes multi-agent credit assignment, because the individual critic provides credit information for each agent to improve its policy in the direction of increasing the global expected return. Moreover, Eq.~\ref{equ:s_dop_g} is also the policy gradients when assigning credits via the aristocrat utility~\citep{wolpert2002optimal} (Appendix~\ref{appx:stochastic_pg}). Eq.~\ref{equ:s_dop_ct} and~\ref{equ:s_dop_g} form the core of our \name~algorithm for learning stochastic policies, which we call \emph{stochastic \name} and is described in detail in Appendix~\ref{appx:algorithm}.

\textbf{The CDM Issue} occurs when decentralized policies' suboptimality reinforces each other through the joint critic. As an agent's stochastic \name~gradients do not rely on the actions of other agents, they attenuate the effect of CDM. We empirically show that \name~can reduce variance in policy gradients in Sec.~\ref{sec:exp-cdm} and can attenuate the CDM issue in complex tasks in Sec.~\ref{sec:exp-sc2-ablation}.

\subsubsection{Stochastic \name~Policy Improvement Theorem}\label{sec:s_dop-pit}
In this section, we theoretically demonstrate that stochastic \name~can converge to local optimal despite the fact that a linearly decomposed critic has limited representational capability. Since an accurate analysis for a complex function approximator (e.g., using neural network) is difficult, we adopt several mild assumptions as in previous work~\citep{feinberg2018model}: we analyze $\pi_i$ and Q function with tabular expression and simplify value evaluation as a MSE problem. By Fact~\ref{fact:mono}, we first show that the linearly decomposed structure ensures that the learned local value function $Q_i^{\phi_i}$ preserves the order of $Q^{\bm\pi}_i(\bm\tau, a_i) = \sum_{\bm  a_{\shortn i}} \bm\pi_{\shortn i}(\bm a_{\shortn i} | \bm \tau_{\shortn i}) Q_{tot}^{\bm\pi}(\bm \tau, (a_i, \bm a_{\shortn i}))$.
\begin{restatable}{fact}{mono} \label{fact:mono}
When policy evaluation converges, $Q_i^{\phi_i}$ satisfies:
\begin{equation*}
    Q_i^{\bm\pi}(\bm\tau, a_i) > Q_i^{\bm\pi}(\bm \tau, a_i') \iff Q_i^{\phi_i}(\bm \tau, a_i) > Q_i^{\phi_i}(\bm \tau, a_i'), \ \ \  \forall \bm \tau, i, a_i, a_i'.
\end{equation*}
\end{restatable}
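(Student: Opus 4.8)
The plan is to unpack the phrase ``policy evaluation converges'' using the standing modelling assumption that evaluation is a mean-squared-error fit of the decomposed critic to $Q^{\bm\pi}_{tot}$, and then to read the claim off the normal equations of that regression, exploiting the fact that the decomposition in Eq.~\ref{equ:dc} is pointwise in $\bm\tau$.

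First I would fix an arbitrary history $\bm\tau$ and note that, at convergence, the parameters $k_i(\bm\tau)\ge 0$, $b(\bm\tau)$ and the tabular entries $Q_i^{\phi_i}(\bm\tau,\cdot)$ jointly minimize
\begin{equation*}
  J(\bm\tau) \;=\; \sum_{\va} \bm\pi(\va|\bm\tau)\Bigl(Q^{\bm\pi}_{tot}(\bm\tau,\va) - {\textstyle\sum}_i k_i(\bm\tau)\,Q_i^{\phi_i}(\bm\tau,a_i) - b(\bm\tau)\Bigr)^2 .
\end{equation*}
This is because, under the tabular assumption, these parameters can be chosen independently at each $\bm\tau$, so the global MSE objective decouples over histories; and because the on-policy evaluation fixed point is, by assumption, the $\bm\pi$-weighted $L^2$ best fit of $Q^{\bm\pi}_{tot}$ within the representable class. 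Substituting $u_i(a_i) := k_i(\bm\tau)\,Q_i^{\phi_i}(\bm\tau,a_i)$ turns $J(\bm\tau)$ into an ordinary linear least-squares problem in the functions $u_i(\cdot)$ and the scalar $b(\bm\tau)$.

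The key step is to differentiate $J(\bm\tau)$ with respect to a single entry $u_i(a_i)$ and to use the product form $\bm\pi(\va|\bm\tau)=\prod_j\pi_j(a_j|\tau_j)$. Assuming each policy has full support (as for the softmax policies used here), the resulting stationarity condition, after dividing by $\pi_i(a_i|\tau_i)>0$ and summing out $\va_{\shortn i}\sim\bm\pi_{\shortn i}$, collapses to
\begin{equation*}
  Q_i^{\bm\pi}(\bm\tau,a_i) \;=\; u_i(a_i) + C_i \;=\; k_i(\bm\tau)\,Q_i^{\phi_i}(\bm\tau,a_i) + C_i,
\end{equation*}
where $C_i := \sum_{j\ne i}\mathbb{E}_{a_j\sim\pi_j}[u_j(a_j)] + b(\bm\tau)$ is independent of $a_i$, and the left-hand side is precisely the marginalized quantity in the statement because, by independence of the agents' action choices, the conditional law of $\va_{\shortn i}$ given $a_i$ is exactly $\bm\pi_{\shortn i}(\cdot|\bm\tau_{\shortn i})$. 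Hence $Q_i^{\phi_i}(\bm\tau,\cdot)$ and $Q_i^{\bm\pi}(\bm\tau,\cdot)$ differ by an affine transformation with positive slope $k_i(\bm\tau)$, and therefore induce the same strict order on actions, which is the desired equivalence; the quantifier over all $\bm\tau,i,a_i,a_i'$ follows since $\bm\tau$ was arbitrary.

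The main obstacle is the bookkeeping around $k_i(\bm\tau)$: the reparametrization $u_i = k_i Q_i^{\phi_i}$ is a bijection only when $k_i(\bm\tau)>0$. The normal equations do show that $k_i(\bm\tau)=0$ can occur only when $Q_i^{\bm\pi}(\bm\tau,\cdot)$ is constant in $a_i$, so the left-hand side of the claimed equivalence is vacuous in that case; I would either dispatch this degenerate case separately or simply record $k_i(\bm\tau)>0$ as the mild regularity condition under which the statement is asserted. The only other point requiring care is the justification of the reduction to the per-$\bm\tau$ least-squares problem, i.e., that under the tabular/MSE modelling assumptions the converged critic is genuinely the $\bm\pi$-weighted projection of $Q^{\bm\pi}_{tot}$; once that is granted, everything above is the one-line normal-equation computation.
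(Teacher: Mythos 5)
Your proposal is correct and follows essentially the same route as the paper's proof (Lemma~\ref{lemma::mono} and its corollary): both read the claim off the first-order stationarity conditions of the per-$\bm\tau$, $\bm\pi$-weighted MSE fit, using the product form of $\bm\pi(\va|\bm\tau)$ to marginalize out $\va_{\shortn i}$ and conclude order preservation. Your linear specialization even yields the slightly sharper identity $Q_i^{\bm\pi}(\bm\tau,\cdot)=k_i(\bm\tau)\,Q_i^{\phi_i}(\bm\tau,\cdot)+C_i$, and the caveat you flag about needing $k_i(\bm\tau)>0$ is exactly the condition the paper also assumes when specializing its monotone-mixing lemma to the linear critic.
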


Based on Fact~\ref{fact:mono}, we prove the following theorem to show that even without accurate estimates of $Q_{tot}^{\bm\pi}$, the stochastic \name~policy updates can still monotonically improve the objective $J(\bm\pi) = \mathbb{E}_{\bm\pi} [\sum_t \gamma^t r_t]$.
\begin{restatable}{prop}{sdoppit}\label{prop:s_dop_pit}
[Stochastic \name~policy improvement theorem] Under a mild assumption, for any pre-update policy $\bm\pi^o$ which is updated by Eq.~\ref{equ:s_dop_g} to $\bm\pi$, let $\pi_i(a_i | \tau_i) = \pi_i^o(a_i | \tau_i) + \beta_{a_i, \bm\tau} \delta$, where $\delta>0$ is a sufficiently small number. If it holds that $\forall \tau, a_i', a_i, Q^{\phi_i}_i(\bm \tau, a_i) > Q^{\phi_i}_i(\bm \tau, a_i') \iff \beta_{a_i, \bm\tau} \geq \beta_{a_i', \bm\tau}$, then we have
\begin{equation}
    J(\bm\pi) \geq J(\bm\pi^o),
\end{equation}
\ie, the joint policy is improved by the update.
\end{restatable}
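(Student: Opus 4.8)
The plan is to reduce the global claim to a one-step improvement inequality at each joint history and then run the classical policy-improvement argument. Write $V^{\bm\pi^o}(\bm\tau)=\sum_{\va}\bm\pi^o(\va|\bm\tau)Q_{tot}^{\bm\pi^o}(\bm\tau,\va)$; the goal becomes
\[
  \Delta(\bm\tau)\ :=\ {\textstyle\sum}_{\va}\bigl(\bm\pi(\va|\bm\tau)-\bm\pi^o(\va|\bm\tau)\bigr)\,Q_{tot}^{\bm\pi^o}(\bm\tau,\va)\ \ge\ 0\qquad\text{for all }\bm\tau .
\]
Given this, substituting $V^{\bm\pi^o}(\bm\tau)\le\sum_{\va}\bm\pi(\va|\bm\tau)Q_{tot}^{\bm\pi^o}(\bm\tau,\va)$ into the Bellman recursion for $\bm\pi$ on histories and iterating (using $\gamma<1$ and bounded rewards so the recursion contracts) yields $V^{\bm\pi}(\bm\tau)\ge V^{\bm\pi^o}(\bm\tau)$ everywhere, and averaging over the initial history gives $J(\bm\pi)\ge J(\bm\pi^o)$. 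All the content then sits in $\Delta(\bm\tau)\ge 0$.

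To establish that, I would expand the factored joint policy. From $\pi_i(a_i|\tau_i)=\pi_i^o(a_i|\tau_i)+\beta_{a_i,\bm\tau}\delta$ we have $\bm\pi(\va|\bm\tau)=\prod_i\bigl(\pi_i^o(a_i|\tau_i)+\beta_{a_i,\bm\tau}\delta\bigr)$, whose zeroth-order term in $\delta$ is $\bm\pi^o(\va|\bm\tau)$ and whose first-order term is $\delta\sum_i\beta_{a_i,\bm\tau}\,\bm\pi^o_{\shortn i}(\bm a_{\shortn i}|\bm\tau_{\shortn i})$. Multiplying by $Q_{tot}^{\bm\pi^o}(\bm\tau,\va)$, summing over $\va=(a_i,\bm a_{\shortn i})$, and recognizing the definition $Q_i^{\bm\pi^o}(\bm\tau,a_i)=\sum_{\bm a_{\shortn i}}\bm\pi^o_{\shortn i}(\bm a_{\shortn i}|\bm\tau_{\shortn i})Q_{tot}^{\bm\pi^o}(\bm\tau,(a_i,\bm a_{\shortn i}))$ from just before Fact~\ref{fact:mono}, I get
\[
  \Delta(\bm\tau)\ =\ \delta\,{\textstyle\sum}_i{\textstyle\sum}_{a_i}\beta_{a_i,\bm\tau}\,Q_i^{\bm\pi^o}(\bm\tau,a_i)\ +\ O(\delta^2),
\]
where the remainder collects all terms carrying at least two factors of $\delta$ and is bounded once $\bm\pi^o$ and the update are fixed.

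The crux is the per-agent inequality $\sum_{a_i}\beta_{a_i,\bm\tau}Q_i^{\bm\pi^o}(\bm\tau,a_i)\ge 0$ for every $i,\bm\tau$. I would obtain it from two observations: (i) the mild assumption is that the update keeps each $\pi_i(\cdot|\tau_i)$ a probability distribution, so $\sum_{a_i}\beta_{a_i,\bm\tau}=0$; and (ii) chaining the proposition's hypothesis (the order of $\beta_{\cdot,\bm\tau}$ over actions matches the order of $Q_i^{\phi_i}(\bm\tau,\cdot)$) with Fact~\ref{fact:mono} (once policy evaluation converges, $Q_i^{\phi_i}(\bm\tau,\cdot)$ and $Q_i^{\bm\pi^o}(\bm\tau,\cdot)$ order the actions identically) shows that $\beta_{\cdot,\bm\tau}$ and $Q_i^{\bm\pi^o}(\bm\tau,\cdot)$ are identically ordered over $A$. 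Chebyshev's sum inequality for identically ordered sequences, together with $\sum_{a_i}\beta_{a_i,\bm\tau}=0$, then gives $\sum_{a_i}\beta_{a_i,\bm\tau}Q_i^{\bm\pi^o}(\bm\tau,a_i)\ge\frac{1}{|A|}\bigl(\sum_{a_i}\beta_{a_i,\bm\tau}\bigr)\bigl(\sum_{a_i}Q_i^{\bm\pi^o}(\bm\tau,a_i)\bigr)=0$, with strict inequality unless $\beta_{\cdot,\bm\tau}$ is constant, that is, unless the update leaves $\pi_i(\cdot|\tau_i)$ unchanged.

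Putting these together, $\Delta(\bm\tau)$ is a nonnegative first-order term plus an $O(\delta^2)$ remainder, and the first-order term is strictly positive at any $\bm\tau$ where the update actually moves the joint policy; so for $\delta>0$ sufficiently small, $\Delta(\bm\tau)\ge 0$ at every $\bm\tau$, which is exactly the per-history inequality the first paragraph needs. The main obstacle I expect is precisely this last quantification: one has to exhibit a single $\delta$ making the (history-dependent) nonnegative first-order gain dominate the pooled quadratic remainder across all histories at once, which is why the statement is only claimed for an infinitesimal step and why the bounds on $Q_{tot}^{\bm\pi^o}$ and on the update magnitude matter. A secondary care point is that the hypothesis as literally written forces the $\beta_{a_i,\bm\tau}$ to be pairwise distinct, so one should say up front how ties (in $\beta_{\cdot,\bm\tau}$ or in $Q_i^{\phi_i}$) are handled before invoking the ordering step.
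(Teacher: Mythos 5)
Your proposal is correct and follows essentially the same route as the paper: reduce to the per-history one-step inequality, expand the factored policy to first order in $\delta$ to isolate $\delta\sum_i\sum_{a_i}\beta_{a_i,\bm\tau}Q_i^{\bm\pi^o}(\bm\tau,a_i)$, use $\sum_{a_i}\beta_{a_i,\bm\tau}=0$ together with the order-matching (via Fact~\ref{fact:mono}) to make that term nonnegative, and finish with the classical policy-improvement telescoping. The only cosmetic difference is that you invoke Chebyshev's sum inequality where the paper proves the same statement as a standalone lemma (Lemma~\ref{fact:sit}); your remarks about uniformity of $\delta$ across histories and about ties are valid care points that the paper also glosses over.
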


Please refer to Appendix~\ref{appx:s_dop_pit} for the proof of Fact~\ref{fact:mono} and Proposition~\ref{prop:s_dop_pit} and more detailed discussions and remarks on the assumptions we adopted.


\subsection{Deterministic Decomposed Off-Policy Policy Gradients}
\subsubsection{Off-Policy Learning}\label{sec:d_dop_pg}
To enable efficient learning with continuous actions, we propose \emph{deterministic \name}. As in single-agent settings, because deterministic policy gradient methods avoid the integral over actions, it greatly eases the cost of off-policy learning~\citep{silver2014deterministic}. For \textbf{policy evaluation}, we train the critic by minimizing the following TD loss:
\begin{equation}\label{equ:d_dpg_c}
    \mathcal{L}(\phi) = \mathbb{E}_{(\bm\tau_t, r_t, \va_t, \bm\tau_{t+1})\sim\mathcal{D}} \left[\left(r_t + \gamma Q_{tot}^{\phi'}(\bm\tau_{t+1}, \bm\mu(\bm\tau_{t+1}; \theta')) - Q_{tot}^{\phi}(\bm\tau_t, \va_t)\right)^2\right],
\end{equation}
where $\mathcal{D}$ is a replay buffer, and $\phi'$, $\theta'$ are the parameters of the target critic and actors, respectively. For \textbf{policy improvement}, we derive the following deterministic \name~policy gradients:
\begin{equation}\label{equ:d_dop_g}
    g=\mathbb{E}_{\bm\tau\sim\mathcal{D}}\left[{\textstyle\sum}_i k_i(\bm\tau) \nabla_{\theta_{i}} \mu_i(\tau_i; \theta_{i}) \nabla_{a_i} Q_i^{\phi_i}(\bm\tau, a_i)|_{a_i=\mu_i(\tau_i; \theta_{i})}\right].
\end{equation}
Detailed proof can be found in Appendix~\ref{appx:d_dop_pg}. Similar to the stochastic case, This result reveals that updates of individual deterministic policies depend on local critics when a linearly decomposed critic is used. Based on Eq.~\ref{equ:d_dpg_c} and Eq.~\ref{equ:d_dop_g}, we develop the DOP algorithm for learning deterministic policies in continuous action spaces, which is described in Appendix~\ref{appx:algorithm} and called \emph{deterministic \name}.


\subsubsection{Representation Capacity of Deterministic \name~Critics}\label{sec:d_dop-rcc}

In continuous and smooth environments, we first show that a \name~critic has sufficient expressive capability to represent Q values in the proximity of $\bm\mu(\bm \tau), \forall \bm\tau$ with a bounded error. For simplicity, we denote $O_{\delta}(\bm \tau)=\{\bm a| \parallel \bm a - \bm\mu(\bm\tau) \parallel_2 \leq \delta \}$.

\begin{restatable}{fact}{cas} \label{fact:cas}
Assume that $\forall \bm \tau, \bm a, \bm a' \in O_{\delta}(\bm \tau)$, $\parallel \nabla_{\bm a} Q_{\text{tot}}^{\bm\mu}(\bm \tau, \va) - \nabla_{\bm a'} Q_{\text{tot}}^{\bm\mu}(\bm \tau, \bm a') \parallel_2 \leq L \parallel \bm a - \bm a' \parallel_2$. The estimation error of a \name~critic can be bounded by $O(L\delta^2)$ for $\bm a \in O_{\delta}(\bm \tau), \forall \bm \tau$. 
\end{restatable}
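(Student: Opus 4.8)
The plan is to exhibit an explicit choice of the decomposed‑critic parameters in Eq.~\ref{equ:dc} that reproduces the first‑order Taylor expansion of $Q_{tot}^{\bm\mu}$ about the joint action $\bm\mu(\bm\tau)=(\mu_1(\tau_1),\dots,\mu_n(\tau_n))$, and then to bound the residual with the Lipschitz‑gradient hypothesis. Concretely, for each joint history $\bm\tau$ I would set $b(\bm\tau)=Q_{tot}^{\bm\mu}(\bm\tau,\bm\mu(\bm\tau))$, $k_i(\bm\tau)=1$, and $Q_i^{\phi_i}(\bm\tau,a_i)=\partial_{a_i}Q_{tot}^{\bm\mu}(\bm\tau,\bm\mu(\bm\tau))\,(a_i-\mu_i(\tau_i))$. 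This respects the structural constraints of the DOP critic: $k_i\equiv 1\ge 0$, and although the partial derivative $\partial_{a_i}Q_{tot}^{\bm\mu}(\bm\tau,\bm\mu(\bm\tau))$ may be sign‑indefinite, its sign is absorbed into $Q_i^{\phi_i}$, which is permitted to depend on both $\bm\tau$ and $a_i$. Because the inner product over the joint action decouples coordinate‑wise, this choice gives
\[
Q_{tot}^{\phi}(\bm\tau,\bm a)=Q_{tot}^{\bm\mu}(\bm\tau,\bm\mu(\bm\tau))+\big\langle\nabla_{\bm a}Q_{tot}^{\bm\mu}(\bm\tau,\bm\mu(\bm\tau)),\,\bm a-\bm\mu(\bm\tau)\big\rangle ,
\]
i.e.\ exactly the affine approximant of $Q_{tot}^{\bm\mu}$ at the anchor $\bm\mu(\bm\tau)$; in particular the approximation is exact at $\bm a=\bm\mu(\bm\tau)$.

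The second step is the standard descent lemma. For any $\bm a\in O_{\delta}(\bm\tau)$ the segment $s\mapsto\bm\mu(\bm\tau)+s(\bm a-\bm\mu(\bm\tau))$, $s\in[0,1]$, stays in the (convex) ball $O_{\delta}(\bm\tau)$ centered at $\bm\mu(\bm\tau)$, so the assumed $L$‑Lipschitz continuity of $\bm a\mapsto\nabla_{\bm a}Q_{tot}^{\bm\mu}(\bm\tau,\bm a)$ on $O_{\delta}(\bm\tau)$ applies along it. Combining this with the fundamental theorem of calculus, $Q_{tot}^{\bm\mu}(\bm\tau,\bm a)-Q_{tot}^{\bm\mu}(\bm\tau,\bm\mu(\bm\tau))=\int_0^1\langle\nabla_{\bm a}Q_{tot}^{\bm\mu}(\bm\tau,\bm\mu(\bm\tau)+s(\bm a-\bm\mu(\bm\tau))),\,\bm a-\bm\mu(\bm\tau)\rangle\,ds$, yields
\[
\big|\,Q_{tot}^{\bm\mu}(\bm\tau,\bm a)-Q_{tot}^{\phi}(\bm\tau,\bm a)\,\big|\;\le\;\tfrac{L}{2}\,\|\bm a-\bm\mu(\bm\tau)\|_2^2\;\le\;\tfrac{L}{2}\,\delta^2 .
\]
Taking the supremum over $\bm a\in O_{\delta}(\bm\tau)$ and over $\bm\tau$ gives the claimed $O(L\delta^2)$ representational error; since the critic that minimizes the policy‑evaluation objective (Eq.~\ref{equ:d_dpg_c}) over actions near $\bm\mu(\bm\tau)$ cannot do worse than this particular parameterization, the same order bound carries over to the fitted critic.

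The argument is mostly routine, and the only genuine points of care are expository. First, one must check that the proposed parameterization really lies in the DOP class — the subtlety being the nonnegativity constraint on $k_i$, which is harmless because the (possibly negative) first‑order coefficient can be carried inside $Q_i^{\phi_i}$ rather than in $k_i$. Second, one should state precisely what "estimation error" means here, reading the Fact as an approximation‑capacity claim (existence of parameters achieving $O(L\delta^2)$ uniform error on $O_{\delta}(\bm\tau)$), not as an assertion about a particular optimizer's trajectory. Conceptually, the heart of the matter — and the reason the bound is of order $\delta^2$ rather than smaller — is that additive separability across $a_1,\dots,a_n$ is exactly compatible with a first‑order expansion but cannot capture the mixed second‑order terms $\partial_{a_i a_j}Q_{tot}^{\bm\mu}$; so the quadratic error is intrinsic to the linear decomposition, and the Lipschitz‑gradient assumption is the natural hypothesis that makes it quantitative. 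I therefore expect no real obstacle beyond choosing the cleanest statement of the descent lemma and keeping the $\bm\tau$‑dependence (through $\bm\mu(\bm\tau)$, the anchor value, and the anchor gradient) notationally transparent.
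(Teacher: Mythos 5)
Your proof is correct and follows essentially the same route as the paper's: both observe that the first-order Taylor expansion of $Q_{tot}^{\bm\mu}$ about the anchor $\bm\mu(\bm\tau)$ is exactly representable in the linearly decomposed form $\sum_i k_i(\bm\tau)Q_i^{\phi_i}(\bm\tau,a_i)+b(\bm\tau)$, and both bound the remainder by $\tfrac{L}{2}\delta^2$ using the Lipschitz-gradient hypothesis, so the best critic in the class can do no worse. Your write-up is slightly tidier in two expository respects --- you exhibit the realizing parameters explicitly (with the sign of the anchor gradient absorbed into $Q_i^{\phi_i}$ so that $k_i\ge 0$ is respected) and you use the integral form of the remainder rather than the Lagrange form, avoiding any appeal to second derivatives --- but these are refinements of the same argument, not a different one.
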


Detailed proof can be found in Appendix~\ref{appx:d_dop_pit}. Here we assume that the gradients of Q-values with respect to actions are Lipschitz smooth under the deterministic policy $\bm\mu$. This assumption is reasonable given that Q-values of most continuous environments with continuous policies are rather smooth.

We further show that when Q-values in the proximity of $\bm\mu(\bm \tau), \forall \bm\tau$ are well estimated with a bounded error, deterministic \name~policy gradients are good approximation to the true gradients (Eq.~\ref{equ:maddpg}). Approximately, $|\nabla_{a_i} Q^{\bm \mu}_{tot}(\bm \tau, \bm a) - \nabla_{a_i} k_i(\bm\tau) \nabla_{a_i} Q_i^{\phi_i}(\bm\tau, a_i)| \sim O(L\delta), \forall i$ when $\delta \ll 1$. For detailed proof, we refer readers to Appendix~\ref{appx:d_dop_pit}.

\section{Experiments}\label{sec:exp}
We design experiments to answer the following questions: (1) Does the CDM issue commonly exist and can decomposed critics attenuate it? (Sec.~\ref{sec:exp-cdm},~\ref{sec:exp-sc2-ablation}, and~\ref{sec:exp-mpe}) (2) Can our decomposed multi-agent tree backup algorithm improve the efficiency of off-policy learning? (Sec.~\ref{sec:exp-sc2-ablation}) (3) Can deterministic \name~learn reasonable credit assignment? (Sec.~\ref{sec:exp-mpe}) (4) Can \name~outperform state-of-the-art MARL algorithms? For evaluation, all the results are averaged over $12$ different random seeds and are shown with $95\%$ confidence intervals.

\subsection{Didactic Example: The CDM Issue and Bias-Variance Trade-Off}\label{sec:exp-cdm}
\begin{figure}
    \centering
    \includegraphics[width=\linewidth]{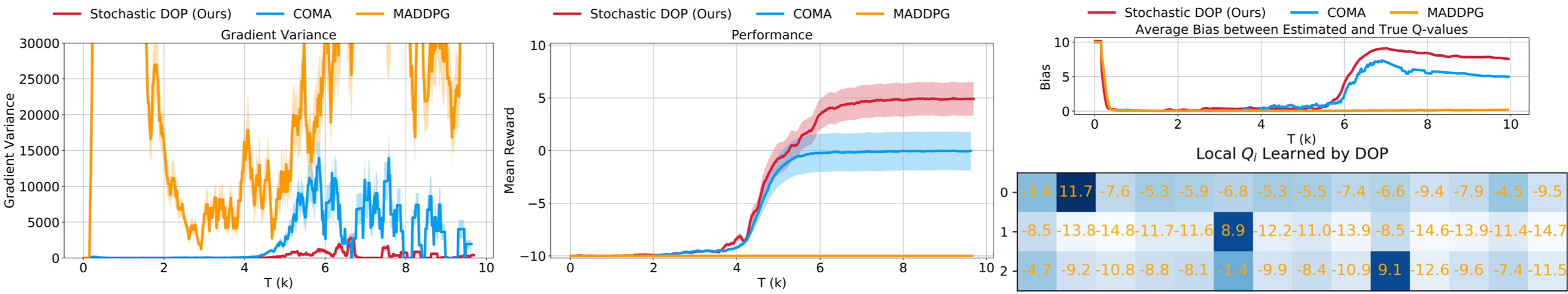}
    \caption{Bias-variance trade-off of \name~on the didactic example. Left: gradient variance; Middle: Performance; Right: Average bias in Q estimations; Right-bottom: the element in $i$th row and $j$th column is the local Q value learned by \name~for agent $i$ taking action $j$.}
    \label{fig:lc_tradeoff}
\end{figure}


We use a didactic example to demonstrate how \name~attenuates CDM and achieves bias-variance trade-off. In a state-less game with $3$ agents and $14$ actions, if agents take $\mathtt{action 1,5,9}$, respectively, they get a team reward of $10$; otherwise $\shortn 10$. We train stochastic \name, COMA, and MADDPG for $10K$ timesteps and show the gradient variance, value estimation bias, and learning curves in Fig.~\ref{fig:lc_tradeoff}. Gumbel-Softmax trick~\citep{jang2017categorical, maddison2017concrete} is used to enable MADDPG to learn in discrete action spaces. 

Fig.~\ref{fig:lc_tradeoff}-right shows the average bias in the estimations of all Q values. We see that linear decomposition introduces extra estimation errors. However, the variance of \name~policy gradients is much smaller than other algorithms (Fig.~\ref{fig:lc_tradeoff}-left). As discussed in Sec.~\ref{sec:ana-CDM}, large variance of other algorithms is due to the CDM issue that undecomposed joint critics are affected by actions of all agents. Free from the influence of other agents, \name~preserves the order of local Q-values (bottom of Fig.~\ref{fig:lc_tradeoff}-right) and effectively reduces the variance of policy gradients. In this way, \name~sacrifices value estimation accuracy for accurate and low-variance policy gradients, which explains why it can outperform other algorithms (Fig.~\ref{fig:lc_tradeoff}-middle).



\begin{figure}
    \centering
    \includegraphics[width=\linewidth]{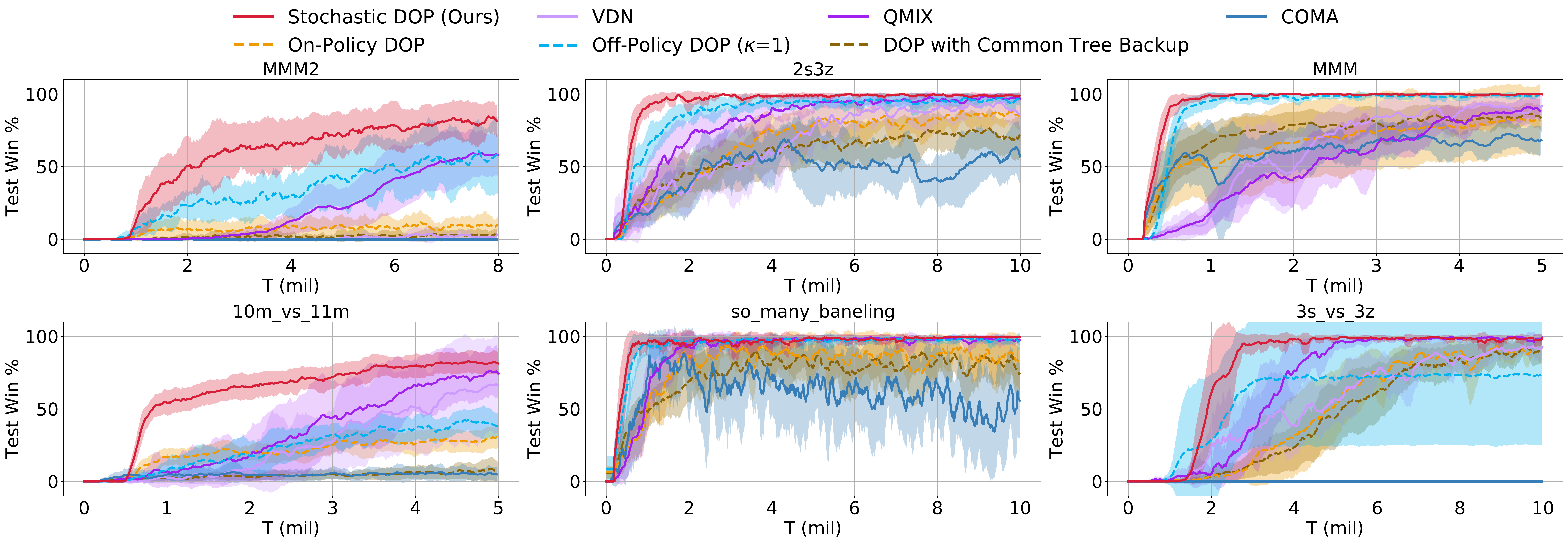}
    \caption{Comparisons with baselines and ablations on the SMAC benchmark.}
    \label{fig:lc_sc2}
\end{figure}
\subsection{Discrete Action Spaces: The StarCraft II Micromanagement Benchmark}\label{sec:exp-sc2}
We evaluate stochastic \name~on the challenging SMAC benchmark~\citep{samvelyan2019starcraft} for its high control complexity. We compare our method with state-of-the-art multi-agent stochastic policy gradient method (COMA) and value-based methods (VDN and QMIX). For stochastic \name, we fix the hyperparameter setting and network structure in all experiments which are described in Appendix~\ref{appx:architecture}. For baselines, we use their default hyperparameter settings that have been fine-tuned on the SMAC benchmark. Results are shown in Fig.~\ref{fig:lc_sc2}. Stochastic \name~significantly outperforms all the baselines by a wide margin. To our best knowledge, this is the first time that a MAPG method has significantly better performance than state-of-the-art value-based methods. 

\subsubsection{Ablations}\label{sec:exp-sc2-ablation}
Stochastic \name~has three main components: (a) off-policy policy evaluations, (b) the decomposed critic, and (c) decomposed multi-agent tree backup. By design, component (a) improves sample efficiency, component (b) can attenuate the CDM issue, and component (c) makes off-policy policy evaluations tractable. We test the contribution of each component by carrying out the following ablation studies.

\textbf{Off-Policy Learning} In our method, $\kappa$ controls the "off-policyness" of training. For \name, we set $\kappa$ to $0.5$. To demonstrate the effect of off-policy learning, we change $\kappa$ to $0$ and $1$ and compare the performance. In Fig.~\ref{fig:lc_sc2}, we can see that both \name~and off-policy \name~perform much better than the on-policy version ($\kappa$=0), highlighting the importance of using off-policy data. Moreover, purely off-policy learning generally needs more samples to achieve similar performance to \name. Mixing with on-policy data can largely improve training efficiency.

\textbf{The CDM Issue} \emph{On-Policy \name}~uses the same decomposed critic structure as \name, but is trained only with on-policy data and does not use tree backup. The only difference between \emph{On-Policy \name}~and COMA is that the former one uses a decomposed joint critic. Therefore, given that a COMA critic has a more powerful expression capacity than a \name~critic, the outperformance of \emph{On-Policy \name}~against COMA shows the effect of CDM. COMA is not stable and may diverge after a near-optimal policy has been learned. For example, on map $\mathtt{so\_many\_baneling}$, COMA policies degenerate after 2M steps. In contrast, On-Policy \name~can converge with efficiency and stability.

\textbf{Decomposed Multi-Agent Tree Backup} \emph{\name~with Common Tree Backup} (\name~without component (c)) is the same as \name~except that $\mathbb{E}_{\bm\pi}[Q_{tot}^\phi(\bm\tau, \cdot)]$ is estimated by sampling 200 joint actions from $\bm\pi$. Here, we estimate this expectation by sampling because direct computation is intractable (for example, $20^{10}$ summations are needed on the map $\mathtt{MMM}$). Fig.~\ref{fig:lc_sc2} shows that when the number of agents increases, sampling becomes less efficient, and common tree backup performs even worse than \emph{On-Policy \name}. In contrast, \name~with decomposed tree backup can quickly and stably converge using a similar number of summations. 


\begin{figure}
    \centering
    \includegraphics[height=2.3cm]{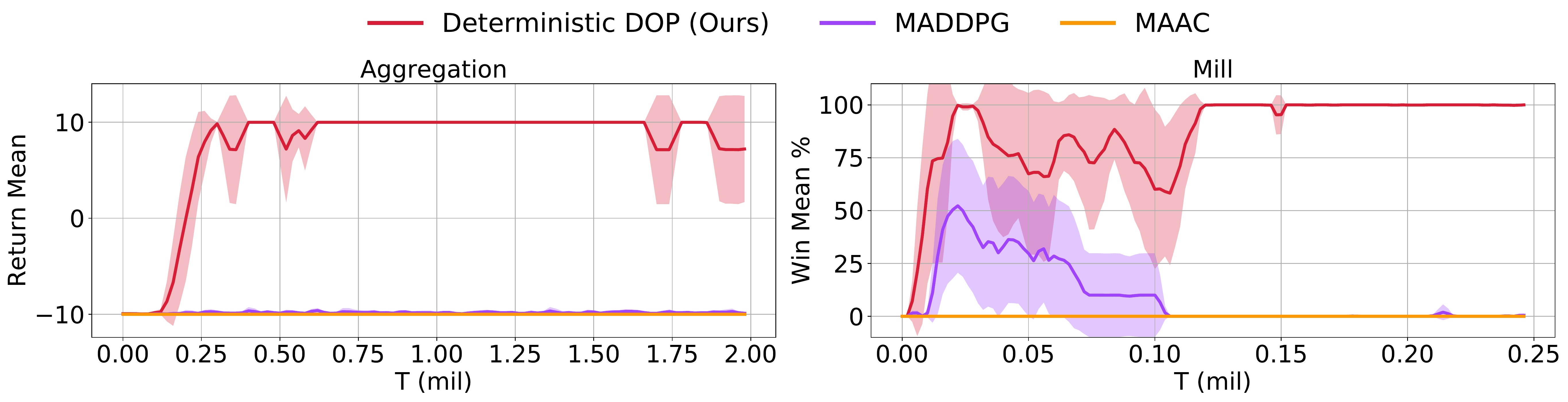}
    \includegraphics[height=2.3cm]{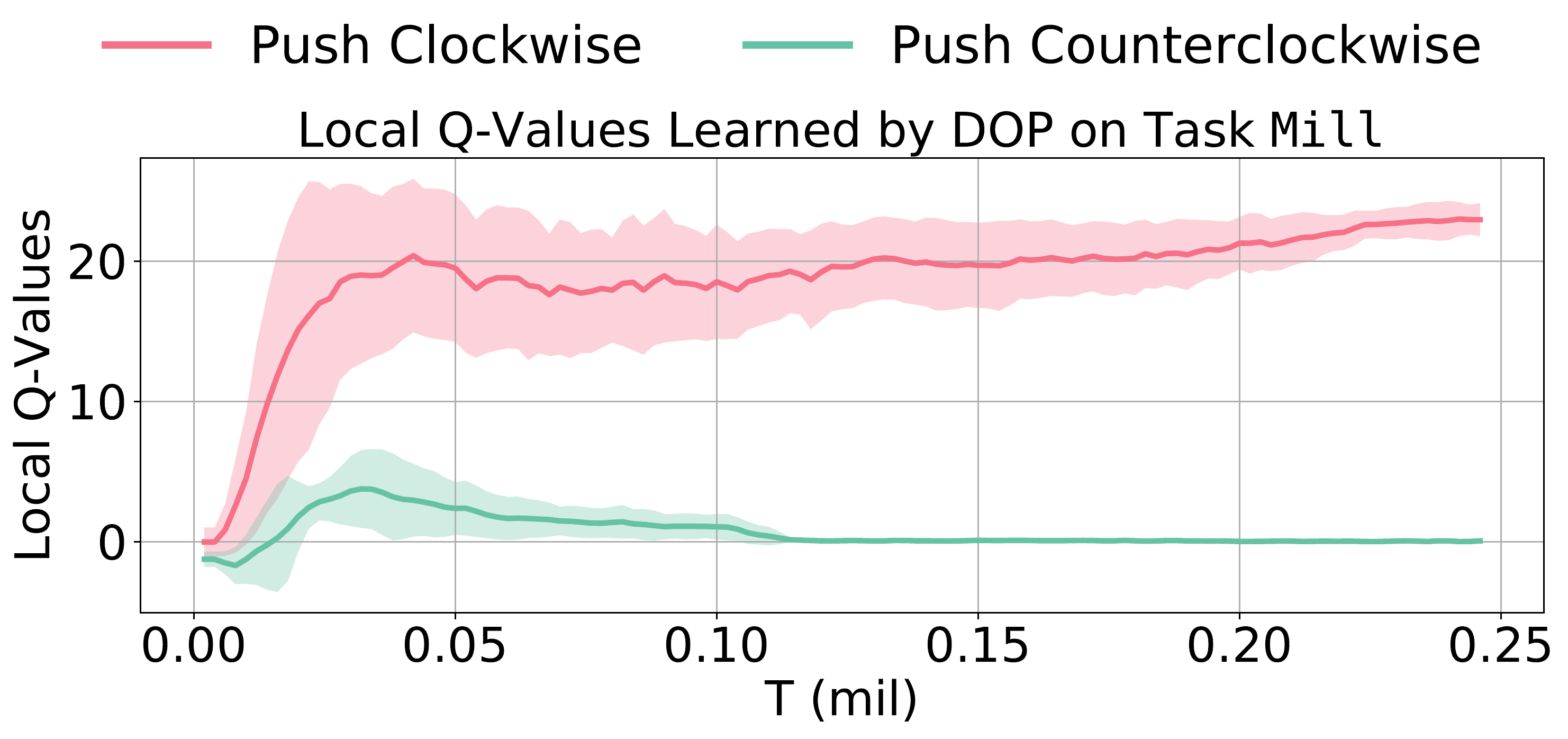}
    \caption{Left and middle: performance comparisons with COMA and MAAC on MPE. Right: The learned credit assignment mechanism on task $\mathtt{Mill}$ by deterministic \name.}
    \label{fig:mpe}
\end{figure}
\subsection{Continuous Action Spaces: Multi-Agent Particle Environments}\label{sec:exp-mpe}
We evaluate deterministic \name~on multi-agent particle environments (MPE, \citep{mordatch2018emergence}), where agents take continuous actions in continuous spaces. We compare our method with MADDPG~\citep{lowe2017multi} and MAAC~\citep{iqbal2019actor}. Hyperparameters and the network structure are fixed for deterministic \name~across experiments, which are described in Appendix~\ref{appx:architecture}.

\textbf{The CDM Issue} We use task $\mathtt{Aggregation}$ as an example to show that deterministic \name~attenuates the CDM issue. In this task, $5$ agents navigate to one landmark. Only when all agents reach the landmark will they get a team reward of $10$ and successfully end the episode; otherwise, an episode ends after $25$ timesteps and agents get a reward of $-10$. $\mathtt{Aggregation}$ is a typical example where other agents' actions can influence an agent's local policy through an undecomposed joint critic. Intuitively, as long as one agent does not reach the landmark, the centralized Q value is negative, confusing other agents who get to the landmark. This intuition is supported by the empirical results shown in Fig.~\ref{fig:mpe}-left -- methods with undecomposed critics can find rewarding configurations but then quickly diverge, while \name~converges with stability. 

\textbf{Credit Assignment} We use task $\mathtt{Mill}$ to show that \name~can learn effective credit assignment mechanisms. In this task, $10$ agents need to rotate a millstone clockwise. They can push the millstone clockwise or counterclockwise with force between $0$ and $1$. If the millstone's angular velocity, $\omega$, gets greater than $30$, agents are rewarded $3$ per step. If $\omega$ exceeds $100$ in $10$ steps, the agents win the episode and get a reward of $10$; otherwise, they lose and get a punishment of -$10$. Fig.~\ref{fig:mpe}-right shows that deterministic \name~can gradually learn a reasonable credit assignment during training, where rotating the millstone clockwise has much larger Q-values. This explains why deterministic \name~outperforms previous state-of-the-art deterministic MAPG methods, as shown in Fig.~\ref{fig:mpe}-middle.

\section{Closing Remarks}
This paper pinpointed drawbacks that hinder the performance of state-of-the-art MAPG algorithms: on-policy learning of stochastic policy gradient methods, the centralized-decentralized mismatch problem, and the credit assignment issue in deterministic policy learning. We proposed decomposed actor-critic methods (\name) to address these problems. Theoretical analyses and empirical evaluations demonstrate that \name~can achieve stable and efficient multi-agent off-policy learning.




\bibliography{iclr2021_conference}
\bibliographystyle{iclr2021_conference}

\newpage
\appendix
\section{Mathematical details for stochastic \name}

\subsection{Decomposed critics enable tractable multi-agent tree backup}
In Sec.~\ref{sec:s_dop_tb}, we propose to use tree backup~\citep{precup2000eligibility, munos2016safe} to carry out multi-agent off-policy policy evaluation. When a joint critic is used, calculating $\mathbb{E}_{\bm\pi} \left[Q^{\phi}_{tot}(\bm\tau,\cdot)\right]$ requires $O(|A|^n)$ steps of summation. To solve this problem, \name~uses a linearly decomposed critic, and it follows that:
\begin{equation}
    \begin{aligned}
    \mathbb{E}_{\bm\pi}[Q_{tot}^\phi(\bm\tau, \va)] & = \sum_{\va} \bm\pi(\va|\bm\tau) Q_{tot}^\phi(\bm\tau, \va) = \sum_{\va} \bm\pi(\va|\bm\tau) \left[\sum_i k_i(\bm\tau)Q_i^{\phi_i}(\bm\tau, a_i) + b(\bm\tau)\right]\\
    & = \sum_{\va} \bm\pi(\va|\bm\tau) \sum_i k_i(\bm\tau)Q_i^{\phi_i}(\bm\tau, a_i) + \sum_{\va} \bm\pi(\va|\bm\tau) b(\bm\tau)  \\
    & = \sum_i \sum_{a_i} \pi_i(a_i|\tau_i) k_i(\bm\tau)Q_i^{\phi_i}(\bm\tau, a_i) \sum_{\va_{\shortn i}}\bm\pi_{\shortn i}(\va_{\shortn i}|{\bm\tau}_{\shortn i}) + b(\bm\tau) \\
    & = \sum_i k_i(\bm\tau)\mathbb{E}_{\pi_i}[Q_i^{\phi_i}(\bm\tau, \cdot)] + b(\bm\tau),
    \end{aligned}
\end{equation}
which means the complexity of calculating this expectation is reduced to $O(n|A|)$.

\subsection{Stochastic \name~policy gradients}\label{appx:stochastic_pg}
\subsubsection{On-policy version}
In Sec.~\ref{sec:s_dop_tb}, we give the on-policy stochastic \name~policy gradients:
\begin{equation}
    g=\mathbb{E}_{\bm\pi}\left[{\textstyle\sum}_i k_i(\bm\tau) \nabla_{\theta_{i}}\log{\pi_i(a_{i}|\tau_{i}; \theta_i)}Q^{\phi_i}_{i}(\bm\tau,a_{i})\right].
\end{equation}
We now derive it in detail.
\begin{proof}
We use the aristocrat utility~\citep{wolpert2002optimal} to perform credit assignment:

\begin{align*}
U_i(\bm\tau, a_i) &=Q^{\phi}_{tot}(\bm\tau,\va)-\sum_{x}{\pi_{i}(x|\tau_{i})Q^{\phi}_{tot}(\bm\tau,(x,\va_{-i}))}\\
&=\sum_{j}{k_j(\bm\tau)Q^{\phi_j}_{j}(\bm\tau,a_j)}-\sum_{x}{\pi_{i}(x|\tau_{i})\left[\sum_{j\neq i}k_{j}(\bm\tau)Q^{\phi_j}_{j}(\bm\tau,a_j)+k_{i}(\bm\tau)Q^{\phi_i}_{i}(\bm\tau,x)\right]}\\
&=k_i(\bm\tau)Q^{\phi_i}_{i}(\bm\tau,a_{i})-k_{i}(\bm\tau)\sum_{x}{\pi_{i}(x|\tau_{i})Q^{\phi_i}_{i}(\bm\tau,x)}\\
&=k_{i}(\bm\tau)\left[Q^{\phi_i}_{i}(\bm\tau,a_{i}) - \sum_{x}{\pi_{i}(x|\tau_{i})Q^{\phi_i}_{i}(\bm\tau,x)}\right],
\end{align*}
It is worth noting that $U_i$ is independent of other agents' actions. Then, for the policy gradients, we have:

\begin{align*}
g&=\mathbb{E}_{\bm\pi}[\sum_{i}{\nabla_{\theta}\log\pi_{i}(a_{i}|\tau_{i})U_{i}(\bm\tau, a_i)}]\\
&=\mathbb{E}_{\bm\pi}\left[\sum_{i}{\nabla_{\theta}\log\pi_{i}(a_{i}|\tau_{i})k_{i}(\bm\tau)\left(Q^{\phi_i}_{i}(\bm\tau,a_{i}) - \sum_{x}{\pi_{i}(x|\tau_{i})Q^{\phi_i}_{i}(\bm\tau,x)}\right)}\right]\\
&=\mathbb{E}_{\bm\pi}\left[\sum_{i}{\nabla_{\theta}\log\pi_{i}(a_{i}|\tau_{i})k_{i}(\bm\tau)Q^{\phi_i}_{i}(\bm\tau,a_{i})}\right].
\end{align*}
\end{proof}

\subsubsection{Off-policy version}
In Appendix~\ref{appx:stochastic_pg}, we derive the on-policy policy gradients for updating stochastic multi-agent policies. Similar to policy evaluation, using off-policy data can improve the sample efficiency with regard to policy improvement.

Using the linearly decomposed critic architecture, the off-policy policy gradients for learning stochastic policies are:
\begin{equation}\label{equ:s_dop-oppg}
    g=\mathbb{E}_{\bm\beta}\left[\frac{\pi(\bm\tau,\va)}{\beta(\bm\tau,\va)}{\textstyle\sum}_i k_i(\bm\tau) \nabla_{\theta}\log{\pi_i(a_{i}|\tau_{i}; \theta_i)}Q^{\phi_i}_{i}(\bm\tau,a_{i})\right].
\end{equation}

\begin{proof}
The objective function is:
\begin{align*}
J(\theta)&=\mathbb{E}_{\bm\beta}\left[V_{tot}^{\bm\pi}(\bm\tau)\right].
\end{align*}

Similar to~\cite{degris2012off}, we have:
\begin{align*}
\nabla_{\theta} J(\theta)&=\mathbb{E}_{\bm\beta}\left[\frac{\pi(\va|\bm\tau)}{\beta(\va|\bm\tau)}\sum_{i}{\nabla_{\theta}\log\pi_{i}(a_{i}|\tau_{i})U_{i}(\bm\tau,a_i)}\right]\\
&=\mathbb{E}_{\bm\beta}\left[\frac{\pi(\va|\bm\tau)}{\beta(\va|\bm\tau)}\sum_{i}{\nabla_{\theta}\log\pi_{i}(a_{i}|\tau_{i})k_{i}(\bm\tau)A_{i}(\bm\tau,a_{i})}\right]\\
&=\mathbb{E}_{\bm\beta}\left[\frac{\pi(\va|\bm\tau)}{\beta(\va|\bm\tau)}\sum_{i}{\nabla_{\theta}\log\pi_{i}(a_{i}|\tau_{i})k_{i}(\bm\tau)Q^{\phi_i}_{i}(\bm\tau,a_{i})}\right].
\end{align*}

\end{proof}

\subsection{The CDM issue}
In Sec.\ref{sec:exp-cdm}, we empirically show that stochastic \name~can reduce variance in policy gradients. We now discuss the reason from a theoretical perspective. 

Denote r.v.s $g_{1} = \nabla_{\theta_{i}}\log{\pi_i(a_{i}|\tau_{i};\theta_i)}Q^{\phi}_{tot}(\bm\tau, \mathbf{a})$, $g_{2}=k_i(\bm\tau) \nabla_{\theta_{i}}\log{\pi_i(a_{i}|\tau_{i};\theta_i)}$ $Q^{\phi_i}_{i}(\bm\tau,a_{i})$. We assume that the gradient of $\pi_i$ with respect to $\theta_i$ is bounded: $\nabla_{\theta_{i}}\log{\pi_i(a_{i}|\tau_{i};\theta_i)}\in[L,R]$. Let $X_{i}=k_{i}(\bm\tau)Q_{i}^{\phi_i}(\bm\tau,a_{i})$, and assume $X_1, X_2, \dots,X_n$ are i.i.d. random variables with mean $\mu$ and variance $\sigma^2$. It follows that:
\begin{align*}
\frac{\Var_{\pi_i}(g_2)}{\Var_{\bm\pi}(g_1)}&=\frac{\Var_{\bm\pi}(g_2)}{\Var_{\bm\pi}(g_1)}\\
&=\frac{\Var_{\bm\pi}(k_i(\bm\tau) \nabla_{\theta_{i}}\log{\pi_i(a_{i}|\tau_{i};\theta_i)}Q^{\phi_i}_{i}(\bm\tau,a_{i}))}{\Var_{\bm\pi}(\sum_{j=1}^{n}k_j(\bm\tau) \nabla_{\theta_{i}}\log{\pi_i(a_{i}|\tau_{i};\theta_i)}Q^{\phi_j}_{j}(\bm\tau,a_{j}))}\\
&\le\frac{R^2\Var_{\bm\pi}(k_i(\bm\tau) Q^{\phi_i}_{i}(\bm\tau,a_{i}))+(R-L)\mathbb{E}_{\bm\pi}[k_i(\bm\tau) Q^{\phi_i}_{i}(\bm\tau,a_{i})]^2}{\Var_{\bm\pi}(\sum_{j=1}^{n}k_j(\bm\tau) \nabla_{\theta_{i}}\log{\pi_i(a_{i}|\tau_{i};\theta_i)}Q^{\phi_j}_{j}(\bm\tau,a_{j}))}\\
&\le\frac{R^2\Var_{\bm\pi}(k_i(\bm\tau) Q^{\phi_i}_{i}(\bm\tau,a_{i}))+(R-L)\mathbb{E}_{\bm\pi}[k_i(\bm\tau Q^{\phi_i}_{i}(\bm\tau,a_{i})]^2}{L^2\Var_{\bm\pi}(\sum_{j=1}^{n}k_j(\bm\tau) Q^{\phi_j}_{j}(\bm\tau,a_{j}))-(R-L)\mathbb{E}_{\bm\pi}[\sum_{j=1}^{n}k_j(\bm\tau) Q^{\phi_j}_{j}(\bm\tau,a_{j})]^2}\\
&=\frac{R^2\sigma^2+(R-L)\mu^2}{nL^2\sigma^2-n^2(R-L)\mu^2}=O(\frac{1}{n}).
\end{align*}

This means that for any $\bm\tau$, under the stated assumptions, we have $\frac{\Var_{\pi_i}(g_2)}{\Var_{\bm\pi}(g_1)}=O(\frac{1}{n})$. However, these assumptions are quite strong.
\section{Mathematical details for deterministic \name}
\subsection{Deterministic \name~policy gradient theorem}\label{appx:d_dop_pg}

In Sec.~\ref{sec:d_dop_pg}, we give the following deterministic \name~policy gradients:
\begin{equation}
    \nabla J(\theta)=\mathbb{E}_{\bm\tau\sim\mathcal{D}}\left[{\textstyle\sum}_i k_i(\bm\tau) \nabla_{\theta_{i}} \mu_i(\tau_i; \theta_{i}) \nabla_{a_i} Q_i^{\phi_i}(\bm\tau, a_i)|_{a_i=\mu_i(\tau_i; \theta_{i})}\right].
\end{equation}
Now we present the derivation of this update rule.

\begin{proof}
Drawing inspirations from single-agent cases~\citep{silver2014deterministic}, we have:
\begin{align*}
\nabla J(\theta)&=\mathbb{E}_{\bm\tau\sim \mathcal{D}}[\nabla_{\theta}Q^{\phi}_{tot}(\bm\tau,\va)]\\
&=\mathbb{E}_{\bm\tau\sim \mathcal{D}}[\sum_{i}{\nabla_{\theta}k_i(\bm\tau)Q_{i}^{\phi_i}(\bm\tau,a_i)|_{a_i=\mu_i(\tau_i;\theta_i)}}]\\
&=\mathbb{E}_{\bm\tau\sim \mathcal{D}}[\sum_{i}{\nabla_{\theta}\mu_i(\tau_i;\theta_i)\nabla_{a_i}k_i(\bm\tau)Q_{i}^{\phi_i}(\bm\tau,a_i)|_{a_i=\mu_i(\tau_i;\theta_i)}}].
\end{align*}
\end{proof}

\section{Proof of stochastic \name~policy improvement theorem}\label{appx:s_dop_pit}

Inspired by previous work~\citep{degris2012off}, we relax the requirement that $Q_{tot}^{\phi}$ is a good estimate $Q_{tot}^{\pi}$ and show that stochastic \name~still guarantees policy improvement. First, we define:
\begin{equation*} 
    Q_i^{\pi}(\bm \tau, a_i) = \sum_{\bm  a_{\shortn i}} \bm\pi_{\shortn i}(\bm a_{\shortn i} | \bm \tau_{\shortn i}) Q_{tot}^{\bm\pi}(\bm \tau, \va), \quad A_i^{\bm\pi}(\bm \tau, a_i) = \sum_{\bm a_{\shortn i}} \bm\pi(\bm a_{\shortn i} | \bm \tau_{\shortn i}) A_i^{\bm\pi}(\bm \tau, \va).
\end{equation*}
To analyze which critic's estimation can minimize the TD-error is challenging. To make it tractable, some works~\citep{feinberg2018model} simplify this process as an MSE problem. In stochastic \name, we adopt the same technique and regard critic's learning as the following MSE problem: 
\begin{align} \label{appx:equ:mse}
    L(\phi) = \sum_{\va, \bm\tau} p(\bm\tau) \pi(\va | \bm\tau)\left(Q^{\bm\pi}_{tot}(\bm\tau, \va) - Q^{\phi}_{tot}(\bm\tau, \va)) \right)^2
\end{align}

where $Q^{\bm\pi}_{tot}(\bm\tau, \va)$ are the true values, which are fixed during optimization. In the following lemma, we show that monotonic decomposition can preserve the order of local action values. Without loss of generality, we will consider a given $\bm\tau$. 

\begin{lemma} \label{lemma::mono}
We consider the following optimization problem:
\begin{align} 
    L_{\bm\tau}(\phi) = \sum_{\va} \bm\pi(\va | \bm\tau)\left(Q^{\bm\pi}(\bm\tau, \va) - f(\mathbf{Q}^{\bm\phi}(\bm\tau, \va)) \right)^2.
\end{align}
Here, $f(\mathbf{Q}^{\bm\phi}(\bm\tau, \va)): \mathcal{R}^n \to \mathcal{R}$, and $\mathbf{Q}^{\bm\phi}(\bm\tau, \va)$ is a vector with the $i^{th}$ entry being $Q^{\phi_i}_i(\bm\tau, a_i)$. $f$ satisfies that $\frac{\partial f}{\partial Q^{\phi_i}_i (\bm\tau, a_i)} > 0$ for any $i, a_i$.

Then, for any local optimal solution, it holds that:
\begin{equation*}
    Q^{\bm\pi}_i(\bm\tau, a_i) \geq Q^{\bm\pi}_i(\bm\tau, a_i') \iff Q^{\phi_i}_i (\bm\tau, a_i) \geq Q^{\phi_i}_i (\bm\tau, a_i'), \ \ \ \ \ \forall i, a_i, a_i'.
\end{equation*}
\end{lemma}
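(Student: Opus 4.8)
The plan is to analyze the first-order optimality conditions of the MSE objective $L_{\bm\tau}(\phi)$ and show that, at any local optimum, the sign of the gradient with respect to each $Q^{\phi_i}_i(\bm\tau, a_i)$ forces the learned local values to be consistent with the ranking induced by the true marginalized values $Q^{\bm\pi}_i(\bm\tau, a_i)$. First I would fix $\bm\tau$ (as the statement permits) and compute $\partial L_{\bm\tau}/\partial Q^{\phi_i}_i(\bm\tau, a_i)$. Using the chain rule and the definition of $f$, this partial derivative is $-2\sum_{\bm a_{\shortn i}} \bm\pi(\va\mid\bm\tau)\,(Q^{\bm\pi}(\bm\tau,\va) - f(\mathbf{Q}^{\bm\phi}(\bm\tau,\va)))\,\frac{\partial f}{\partial Q^{\phi_i}_i}$. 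For the special case relevant to \name, $f$ is the linear map $f(\mathbf{Q}) = \sum_j k_j(\bm\tau) Q_j + b(\bm\tau)$ with $k_i(\bm\tau) > 0$, so $\partial f/\partial Q^{\phi_i}_i = k_i(\bm\tau)$ is a positive constant; in the general case it is a positive function, but its positivity is all we need. Setting this partial to zero at a local optimum, and summing out $\bm a_{\shortn i}$ against $\bm\pi_{\shortn i}(\bm a_{\shortn i}\mid\bm\tau_{\shortn i})$, yields a set of stationarity equations indexed by $(i, a_i)$.

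Next I would massage those stationarity equations into a statement about the ordering. The key algebraic observation is that, because $f$ restricted to varying only the $i$-th coordinate is a strictly increasing function of $Q^{\phi_i}_i(\bm\tau, a_i)$ (with the other agents' local values held at their optimal values), and because $Q^{\bm\pi}_i(\bm\tau, a_i) = \sum_{\bm a_{\shortn i}} \bm\pi_{\shortn i}(\bm a_{\shortn i}\mid\bm\tau_{\shortn i}) Q^{\bm\pi}_{tot}(\bm\tau, \va)$ is exactly the $\bm\pi_{\shortn i}$-average of the true values appearing in the residual, the stationarity condition says (after rearranging) that $f(\dots, Q^{\phi_i}_i(\bm\tau, a_i), \dots)$, averaged over $\bm a_{\shortn i}\sim\bm\pi_{\shortn i}$, equals $Q^{\bm\pi}_i(\bm\tau, a_i)$ up to a term that does not depend on $a_i$. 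Concretely, in the linear case one gets $k_i(\bm\tau) Q^{\phi_i}_i(\bm\tau, a_i) = Q^{\bm\pi}_i(\bm\tau, a_i) - c_i$ for some constant $c_i$ independent of $a_i$ (collecting the $b$ term and the contributions of agents $j\neq i$, which are constant in $a_i$ after marginalization). Since $k_i(\bm\tau) > 0$, this is an order-preserving affine relationship between $Q^{\phi_i}_i(\bm\tau, a_i)$ and $Q^{\bm\pi}_i(\bm\tau, a_i)$, which is precisely the claimed equivalence. For the general monotone $f$, I would argue the same conclusion by a monotonicity/strict-monotonicity argument rather than an explicit solve: if $Q^{\phi_i}_i(\bm\tau, a_i) > Q^{\phi_i}_i(\bm\tau, a_i')$ but $Q^{\bm\pi}_i(\bm\tau, a_i) \le Q^{\bm\pi}_i(\bm\tau, a_i')$, I would exhibit a direction in $\phi_i$-space (perturbing only the two coordinates $a_i, a_i'$) along which $L_{\bm\tau}$ strictly decreases, contradicting local optimality; the positivity $\partial f/\partial Q^{\phi_i}_i > 0$ is what makes this perturbation well-signed.

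The main obstacle I anticipate is handling the general (nonlinear but monotone) $f$ cleanly: in that regime one cannot simply solve the stationarity equations, and the residual $Q^{\bm\pi} - f(\mathbf{Q}^{\bm\phi})$ couples all coordinates, so the ``constant in $a_i$'' cancellation that works so transparently in the linear case needs a more careful argument. I expect the cleanest route there is the contradiction/exchange argument sketched above: assume the ordering is violated for some pair $(a_i, a_i')$, swap or interpolate the values $Q^{\phi_i}_i(\bm\tau, a_i)$ and $Q^{\phi_i}_i(\bm\tau, a_i')$ (or move them toward each other), and show via convexity of the squared residual in the one-dimensional slice plus monotonicity of $f$ that the objective strictly decreases — contradicting that we are at a local optimum. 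A secondary subtlety worth a remark is degeneracy: if $Q^{\bm\pi}_i(\bm\tau, a_i) = Q^{\bm\pi}_i(\bm\tau, a_i')$, the argument should still give $Q^{\phi_i}_i(\bm\tau, a_i) = Q^{\phi_i}_i(\bm\tau, a_i')$ at a local optimum (or at least be consistent with the stated $\iff$ read with non-strict inequalities), which I would note follows from the same first-order condition. Once Lemma~\ref{lemma::mono} is established, Fact~\ref{fact:mono} is the immediate specialization to $f(\mathbf{Q}) = \sum_i k_i(\bm\tau) Q_i + b(\bm\tau)$ with $k_i \ge 0$ (strictly positive after the absolute-value activation, on the relevant branch).
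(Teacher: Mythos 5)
Your proposal is correct and follows essentially the same route as the paper: both derive the first-order stationarity condition $\sum_{\va_{\shortn i}}\bm\pi_{\shortn i}(\va_{\shortn i}\mid\bm\tau_{\shortn i})\,f(\mathbf{Q}^{\bm\phi}(\bm\tau,(a_i,\va_{\shortn i}))) = Q^{\bm\pi}_i(\bm\tau,a_i)$ and then exploit that the left-hand side depends on $a_i$ only through $Q^{\phi_i}_i(\bm\tau,a_i)$, with strictly positive partial derivative, to transfer the ordering. The only difference is cosmetic: where you solve the linear case explicitly and sketch a perturbation-based contradiction for general $f$, the paper handles general $f$ directly by observing that the marginalized map $q(\bm\tau,a_i)$ is a single strictly increasing function of $Q^{\phi_i}_i(\bm\tau,a_i)$, which gives the biconditional immediately.
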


\begin{proof}
A necessary condition for a local optimal is:
\begin{align*}
    \frac{\partial L_{\bm\tau}(\phi)}{\partial Q^{\phi_i}_i(\bm\tau, a_i)} & = \pi_i(a_i | \tau_i) \sum_{a_{\shortn i}} \prod_{j\ne i} \pi_j(a_j | \tau_j) \left(Q_{tot}^{\bm\pi}(\bm\tau, \va) - f(\mathbf{Q}^{\bm\phi}(\bm\tau, \va))\right) (-\frac{\partial f}{\partial Q^{\phi_i}_i (\bm\tau, a_i)}) = 0, \ \ \ \forall i, a_i.
\end{align*}

This implies that, for $\forall i, a_i$, we have
\begin{align*}
 & \sum_{a_{\shortn i}} \prod_{j\ne i} \pi_j(a_j | \tau_j) (Q^{\bm\pi}_{tot}(\bm\tau, \va) - f(\mathbf{Q}^{\bm\phi}(\bm\tau, \va))) = 0 \\
 \Rightarrow & \sum_{a_{\shortn i}} \bm\pi_{\shortn i}(\va_{\shortn i} | \bm\tau_{\shortn i}) f(\mathbf{Q}^{\bm\phi}(\bm\tau, (a_i, \va_{-i}))) = Q^{\bm\pi}_i(\bm\tau, a_i)
\end{align*}

We consider the function $q(\bm\tau, a_i) = \sum_{\va_{\shortn i}} \bm\pi_{\shortn i}(\va_{\shortn i} | \bm\tau_{-i}) f(\mathbf{Q}^{\bm\phi}(\bm\tau, (a_i, \va_{\shortn i})))$, which is a function of $\mathbf{Q}^{\phi}$. Its partial derivative with respect to $Q^{\phi_i}_i(\bm\tau, a_i)$ is:
\begin{equation*}
    \frac{\partial q(\bm\tau, a_i)}{\partial Q^{\phi_i}_i (\bm\tau, a_i)} = \sum_{\va_{\shortn i}} \bm\pi_{\shortn i}(\va_{\shortn i} | \bm\tau_{\shortn i}) \frac{\partial f(\mathbf{Q}^{\bm\phi}(\bm\tau, (a_i, \va_{\shortn i})))}{\partial Q^{\phi_i}_i(\bm\tau, a_i)} > 0
\end{equation*}

Therefore, if $Q_i^{\bm\pi}(\bm\tau, a_i) \geq Q_i^{\bm\pi}(\bm\tau, a_i')$, then any local minimal of $L_{\bm\tau}(\phi)$ satisfies $Q^{\phi_i}_i (\bm\tau, a_i) \geq Q^{\phi_i}_i (\bm\tau, a_i')$.
\end{proof} 

In our linearly decomposed critic architecture, $k_i(\bm\tau) > 0, \forall i$, which satisfies the condition $\frac{\partial f}{\partial Q^{\phi_i}_i (\bm\tau, a_i)} > 0$. Therefore, Fact~\ref{fact:mono} holds as a corollary of Lemma~\ref{lemma::mono}:

\mono*


Based on this conclusion, we are able to prove the policy improvement theorem for stochastic \name. It shows that even without an accurate estimate of $Q_{tot}^{\bm\pi}$, the stochastic \name~policy updates can still improve the objective function $J(\bm\pi) = \mathbb{E}_{\bm\pi} [\sum_t \gamma^t r_t]$. We first prove the following lemma.

\begin{restatable}{lemma}{lemmasit} \label{fact:sit}
For two sequences $\{a_i\}, \{b_i\}, i \in [n]$ listed in an increasing order. If $\sum_i b_i = 0$, then $\sum_i a_ib_i \geq 0$.
\end{restatable}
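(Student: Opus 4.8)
The plan is to exploit the freedom afforded by the constraint $\sum_i b_i = 0$: for \emph{any} real constant $c$ we have $\sum_i a_i b_i = \sum_i (a_i - c)\,b_i$, since the extra term is $c\sum_i b_i = 0$. So it suffices to pick $c$ cleverly so that the recentered sum becomes termwise nonnegative. The natural choice is $c = a_m$, where $m$ is a ``sign-change index'' of the nondecreasing sequence $\{b_i\}$.

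First I would establish such an index. Because $\{b_i\}$ is nondecreasing, the set $\{i : b_i < 0\}$ is a (possibly empty) prefix $\{1,\dots,m\}$, and $b_i \ge 0$ for all $i > m$. If every $b_i \ge 0$, then $\sum_i b_i = 0$ forces $b_i = 0$ for all $i$ and the claim is trivial; symmetrically if every $b_i \le 0$. So we may assume $1 \le m \le n-1$, with $b_i \le 0$ for $i \le m$ and $b_i \ge 0$ for $i > m$.

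Next, using $\sum_i b_i = 0$ with $c = a_m$, I would write
\begin{equation*}
\sum_{i=1}^n a_i b_i \;=\; \sum_{i=1}^n (a_i - a_m)\,b_i \;=\; \sum_{i=1}^{m} (a_i - a_m)\,b_i \;+\; \sum_{i=m+1}^{n} (a_i - a_m)\,b_i .
\end{equation*}
For $i \le m$ we have $a_i - a_m \le 0$ (as $\{a_i\}$ is nondecreasing) and $b_i \le 0$, so $(a_i - a_m)b_i \ge 0$; for $i > m$ we have $a_i - a_m \ge 0$ and $b_i \ge 0$, so again $(a_i - a_m)b_i \ge 0$. Hence the whole expression is a sum of nonnegative terms, giving $\sum_i a_i b_i \ge 0$.

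I do not anticipate a genuine obstacle — this is essentially Chebyshev's sum inequality specialized to the case $\sum_i b_i = 0$. The only mildly delicate point is the degenerate situation where $\{b_i\}$ does not change sign, which as noted forces $b \equiv 0$. An alternative route that avoids the case split entirely is Abel summation: setting $B_k = \sum_{i\le k} b_i$, one obtains $\sum_i a_i b_i = \sum_{i=1}^{n-1}(a_i - a_{i+1})B_i$ (using $B_n = 0$), and $B_k \le 0$ for every $k$ because a nondecreasing sequence with zero total has nonpositive partial sums; then each summand is a product of two nonpositive numbers, so again the sum is $\ge 0$.
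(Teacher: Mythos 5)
Your proof is correct, and it takes a cleaner route than the paper's. Both arguments exploit the same basic freedom --- since $\sum_i b_i = 0$, one may replace $a_i$ by $a_i - c$ for any constant $c$ --- but they differ in the choice of $c$ and in what happens afterwards. The paper centers $a$ at its mean $\bar a$, then locates sign-change indices $j$ for the centered sequence $\tilde a$ and $k$ for $b$, splits the sum into the three blocks $[1,j]$, $[j+1,k]$, $[k+1,n]$, discards the first (termwise nonnegative) block, and bounds the other two by pulling out $a_k$ and $a_{k+1}$ before combining the partial sums of $b$. Your choice $c = a_m$, with $m$ the sign-change index of $b$, makes every term $(a_i - a_m)b_i$ nonnegative at once, so the entire block decomposition and the final comparison of partial sums disappear; it also sidesteps the edge cases the paper glosses over (existence of both sign-change indices, and the ``by symmetry assume $j \le k$'' step). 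Your Abel-summation variant, $\sum_i a_i b_i = \sum_{i=1}^{n-1}(a_i - a_{i+1})B_i$ with all $B_i \le 0$, is a third correct route and is arguably the most mechanical, since it needs no case split at all. Any of these would serve; yours is shorter and more robust than the one in the paper.
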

 
 \begin{proof}
We denote $\bar{a} = \frac{1}{n} \sum_i a_i$, then $\sum_i a_ib_i = \bar{a}(\sum_i b_i) + \sum_i \tilde{a}_i b_i$ where $\sum_i \tilde{a}_i = 0$. Without loss of generality, we assume that $\bar{a}_i = 0, \forall i$. $j$ and $k$ which $a_j \leq 0, a_{j+1} \geq 0$ and $b_k \leq 0, b_{k+1} \geq 0$. Since $a, b$ are symmetric, we assume $j \leq k$. Then we have 
\begin{align*}
    \sum_{i \in [n]} a_ib_i & = \sum_{i \in [1, j]} a_ib_i + \sum_{i \in [j+1, k]} a_ib_i + \sum_{i \in [k+1, n]} a_ib_i\\
    & \geq \sum_{i \in [j+1, k]} a_ib_i + \sum_{i \in [k+1, n]} a_ib_i\\
    & \geq a_k \sum_{i \in [i+1, k]} b_i + a_{k+1}\sum_{i \in [k+1, n]} b_i 
\end{align*}
As $\sum_{i \in [j+1, n]} b_i \geq 0$, we have $- \sum_{i \in [j+1, k]} b_i \leq \sum_{i \in [k+1, n]} b_i$.

Thus, $\sum_{i \in [n]} a_ib_i \geq (a_{k+1} - a_k) \sum_{i \in [k+1, n]} b_i \geq 0$.
 \end{proof}
 
We now prove the policy improvement theorem for stochastic DOP. We restate this theorem as follows for clarity.
\sdoppit*
 
\begin{proof}
Under Fact $\ref{fact:mono}$, it follows that
 \begin{equation}
     Q_i^{\bm\pi^o}(\bm \tau, a_i) > Q_i^{\bm\pi^o}(\bm \tau, a_i') \iff \beta_{a_i, \bm\tau} \geq \beta_{a_i', \bm\tau}.
 \end{equation}
 Since $J(\bm\pi) = \sum_{\bm \tau_0}p(\bm \tau_0) V_{tot}^{\bm\pi}(\bm \tau_0)$, it suffices to prove that $\forall \bm\tau_t, V_{tot}^{\bm\pi}(\bm \tau_t) \geq V_{tot}^{\bm\pi^o}(\bm \tau_t)$. We have:
 \begin{align}
     \sum_{\va_t} \bm\pi(\va_t | \bm\tau_t) Q^{\bm\pi^o}_{tot}(\bm \tau_t, \va_t) & = \sum_{\va_t} \left(\prod_{i = 1}^n \pi_i(a_i^t | \tau_i^t) \right) Q_{tot}^{\bm\pi^o}(\bm \tau_t, \va_t) \nonumber \\
     & = \sum_{\va_t} \left( \prod_{i = 1}^n (\pi^o_i(a_i^t | \tau_i^t) + \beta_{a_i^t, \bm\tau_t} \delta) \right) Q_{tot}^{\bm\pi^o}(\bm \tau_t, \va_t) \nonumber \\
     & = V^{\bm\pi^o}_{tot}(\bm\tau_t) + \delta \sum_{i = 1}^n \sum_{\va_t} \beta_{a_i^t, \bm\tau_t} \left( \prod_{j \neq i} \pi_j^o(a_j^t | \tau_j^t) \right) Q_{tot}^{\bm\pi^o}(\bm \tau_t, \va_t) \nonumber + o(\delta)\\
     & = V^{\bm\pi^o}_{tot}(\bm\tau_t) + \delta \sum_{i = 1}^n \sum_{a_i^t} \beta_{a_i^t, \bm\tau_t} Q^{\bm\pi^o}_i(\bm\tau_t, a_i^t) + o(\delta). \label{a-equ:tmpl}
 \end{align}
 
 Since $\delta$ is sufficiently small, in the following analysis we omit $o(\delta)$. Observing that $\sum_{a_i} \pi_i(a_i | \tau_i) = 1, \forall i$, we get $\sum_{a_i} \beta_{a_i, \bm\tau} = 0$. Thus, by Lemma~\ref{fact:sit} and Eq.~\ref{a-equ:tmpl}, we have
 \begin{equation}
     \sum_{\va_t} \bm\pi(\va_t | \bm\tau_t) Q_{tot}^{\bm\pi^o}(\bm\tau_t, \va_t) \geq V_{tot}^{\bm\pi^o}(\bm\tau_t).
 \end{equation}
 
Similar to the policy improvement theorem for tabular MDPs \citep{sutton2018reinforcement} , we have
\begin{align*}
    V^{\bm\pi^o}_{tot}(\bm\tau_t) & \leq \sum_{\bm a_t} \bm\pi(\va_t | \bm\tau_t) Q_{tot}^{\bm\pi^o}(\bm \tau_t, \va_t) \\
    & = \sum_{\va_t} \bm\pi(\va_t | \bm\tau_t) \left( r(\bm\tau_t, \va_t) + \gamma \sum_{\bm \tau_{t+1}} p(\bm \tau_{t+1} | \bm \tau_t, \va_t) V^{\bm\pi^o}_{tot}(\bm \tau_{t+1}) \right) \\
    & \leq \sum_{\va_t} \bm\pi(\va_t | \bm\tau_t) \left( r(\bm \tau_t, \va_t) + \gamma \sum_{\bm \tau_{t+1}} p(\bm \tau_{t+1} | \bm \tau_t, \va_t) \left( \sum_{\va_{t+1}} \bm\pi(\va_{t+1} | \bm\tau_{t+1}) Q^{\bm\pi^o}_{tot}(\bm \tau_{t+1}, \va_{t+1}) \right) \right) \\
    & \leq \cdots \\
    & \leq V^{\bm\pi}_{tot}(\bm \tau_t).
\end{align*}

This implies $J(\bm\pi) \geq J(\bm\pi^o)$ for each update. 

Moreover, we verify that $\forall \tau, a_i', a_i, Q^{\phi_i}_i(\bm \tau, a_i) > Q^{\phi_i}_i(\bm \tau, a_i') \iff \beta_{a_i, \bm\tau} \geq \beta_{a_i', \bm\tau}$ (the \textsc{Monotone} condition) holds for any $\bm\pi$ with a tabular expression. For these $\bm\pi$, let $\pi_i(a_i | \tau_i) = \theta_{a_i, \bm \tau}$, then it holds that $\sum_{a_i} \theta_{a_i, \bm \tau} = 1$.  Since the gradient of policy update can be written as:
\begin{align*}
    \nabla_{\theta} J(\bm\pi_{\theta}) & = \mathbb{E}_{d(\tau)} \left[ \sum_i k_i(\bm\tau) \nabla_{\theta} \log \pi_i(a_i | \tau_i; \theta_i) 
    Q_i^{\phi_i}(\bm \tau, a_i) \right ] \\
    & = \sum_{\tau} d (\tau) \sum_i k_i(\bm\tau)\nabla_{\theta_i} \pi(a_i | \tau_i) Q^{\phi_i}_i(\bm \tau, a_i),
\end{align*}

where $d^{\pi}(\tau)$ is the occupancy measure w.r.t our algorithm. With a tabular expression, the update of each $\theta_{a_i, \bm \tau}$ is proportion to $\beta_{a_i, \bm \tau}$
\begin{equation*}
    \beta_{a_i, \bm \tau} \propto \frac{d \eta(\pi_{\theta})}{d \theta_{a_i, \bm \tau}} = d(\bm \tau) Q_i^{\phi_i}(\bm \tau, a_i) 
\end{equation*}

Clearly, $\beta_{a_i', \bm \tau} \geq \beta_{a_i, \bm \tau} \iff Q_i^{\phi_i}(\bm \tau, a_i') \geq Q_i^{\phi_i}(\bm \tau, a_i)  $ .
\end{proof}

\paragraph{Remark} The above proposition tries to explain why \name~ works well despite the stronger projection error brought in by the decomposition. It is worth mentioning that we usually use neural network to express $\bm\pi$ and Q-values that may violate the \textsc{Monotone} condition. Moreover, minimizing td-error with a target network differs from the MSE problem we assumed in our analysis. However, as long as the above condition holds in most cases, policy improvement can still be guaranteed. As a result, the linearly decomposed critic structure may not provide an accurate estimation of $Q_{tot}^{\bm\pi}$, but it can be used to improve policy $\bm\pi$. Empirically, we have shown that the linearly decomposed critic structure has considerable stable performance despite some potential shortcomings.

\section{Representational capability of deterministic \name~critics}\label{appx:d_dop_pit}

In Sec.~\ref{sec:d_dop-rcc}, we present the following facts about deterministic \name:
\cas*

We consider the Taylor expansion with Lagrange remainder of $Q^{\bm\mu}_{tot}(\bm \tau, \va)$. Namely,
\begin{align*}
Q^{\bm\mu}_{\text{tot}}(\bm \tau, \bm a) = Q^{\bm\mu}_{\text{tot}}(\bm \tau, \bm\mu(\tau)) + \nabla_{\bm a} Q^{\bm\mu}_{\text{tot}}(\bm \tau, \bm a)|_{\bm a = \bm\mu(\bm \tau)} \cdot (\bm a - \bm\mu (\bm\tau)) + \frac{1}{2} \nabla^2 Q^{\bm\mu}_{\text{tot}}(\bm \tau, \bm a_{\zeta}) \parallel \bm a - \pi(\bm \tau) \parallel^2
\end{align*}

Since $\forall \bm a \in O_{\delta}(\pi(\bm \tau))$, we have
\begin{align*}
    |Q^{\bm\mu}_{\text{tot}}(\bm \tau, \bm a) - Q^{\bm\mu}_{\text{tot}}(\bm \tau, \bm\mu(\bm\tau)) - \nabla_{\bm a} Q^{\bm\mu}_{\text{tot}}(\bm \tau, \bm a)|_{\bm a = \bm\mu(\bm \tau)} \cdot (\bm a - \bm\mu (\bm \tau)) | \leq \frac{1}{2}L \delta^2
\end{align*}

Noticing that the first order Taylor expansion of $Q^{\bm\mu}_{\text{tot}}$ has the form $\sum_{[n]} k_i(\bm \tau) Q_i^{\phi}(\bm \tau, a_i) + b(\bm \tau)$. Therefore, the optimal solution of the MSE problem in Eq. \ref{appx:equ:mse} under \name~critics has an error term less than $O(L\delta^2)$ for arbitrary sampling distribution $p(\bm \tau, \bm a)$ of  $\bm a \in O_{\delta}(\bm\mu(\bm \tau))$.

When Q values in the proximity of $\mu(\bm \tau), \forall \bm\tau$ is well estimated within a bounded error and $\delta \ll 1$, approximately, we have

\begin{align} 
    |\frac{\partial Q^{\bm \mu}_{\text{tot}}(\bm \tau, \bm a)}{\partial a_i} - \frac{\partial Q^{\phi}_{\text{tot}}(\bm \tau, \bm a)}{\partial a_i}|& \approx |\frac{Q^{\bm \mu}_{\text{tot}}(\bm \tau, a_{-i}, a_i+\delta) -  Q^{\bm \mu}_{\text{tot}}(\bm \tau, \bm a)}{\delta} - \frac{ Q^{\phi}_{\text{tot}}(\bm \tau, a_{-i}, a_i+\delta) -  Q^{\phi}_{\text{tot}}(\bm \tau, \bm a)}{\delta}| \nonumber \\
    & = |\frac{Q^{\bm \mu}_{\text{tot}}(\bm \tau, a_{-i}, a_i+\delta) -  Q^{\phi}_{\text{tot}}(\bm \tau, a_{-i}, a_i+\delta)}{\delta} -  \frac{ Q^{\bm \mu}_{\text{tot}}(\bm \tau, \bm a) -  Q^{\phi}_{\text{tot}}(\bm \tau, \bm a)}{\delta}| \nonumber \\
    & \sim O(L\delta) \nonumber
\end{align}

\section{Algorithms}\label{appx:algorithm}
In this section, we describe the details of our algorithms, as shown in Algorithm~\ref{alg:s_dop} and~\ref{alg:d_dop}.

\begin{algorithm}
	\caption{Stochastic DOP}\label{alg:s_dop}
	\begin{algorithmic}
	\State Initialize a critic network $Q^{\phi}$, actor networks $\pi_{\theta_i}$, and a mixer network $M^{\psi}$ with random parameters $\phi$,$\theta_i$, $\psi$.
	\State Initialize target networks: $\phi'=\phi$, $\theta'=\theta$, $\psi'=\psi$
	\State Initialize an off-policy replay buffer $\mathcal{D}_\text{off}$ and an on-policy replay buffer $\mathcal{D}_\text{on}$.
	\For{$t=1$ to $T$}
	\State Generate a trajectory and store it in $\mathcal{D}_\text{off}$ and $\mathcal{D}_\text{on}$
    \State Sample a batch consisting of $N_1$ trajectories from $\mathcal{D}_\text{on}$ 
    \State Update decentralized policies using the gradients described in Eq.~\ref{equ:s_dop_g}
    \State Calculate $\mathcal{L}^{\text{On}}(\phi)$
    \State Sample a batch consisting of $N_2$ trajectories from $\mathcal{D}_\text{off}$
    \State Calculate $\mathcal{L}^{\text{\name-TB}}(\phi)$
    \State Update critics using $\mathcal{L}^{\text{On}}(\phi)$ and $\mathcal{L}^{\text{\name-TB}}(\phi)$
    \If{$t\ \text{mod}\  d=0$}
    \State Update target networks: $\phi'=\phi$, $\theta'=\theta$, $\psi'=\psi$
    \EndIf
    \EndFor
	\end{algorithmic}
\end{algorithm}

\begin{algorithm}
	\caption{Deterministic DOP}\label{alg:d_dop}
	\begin{algorithmic}
	\State Initialize a critic network $Q^{\phi}$, actor networks $\mu_{\theta_i}$ and a mixer network $M^{\psi}$ with random parameters $\theta,\phi,\psi$
	\State Initialize target networks: $\phi'=\phi$, $\theta'=\theta$, $\psi'=\psi$
	\State Initialize replay buffer $\mathcal{D}$
	\For{$t=1$ to $T$}
	\State Select action with exploration noise $\va\sim\bm\mu(\bm\tau)+\epsilon$, generate a transition and store the transition tuple in $\mathcal{D}$
    \State Sample $N$ transitions from $\mathcal{D}$ 
    \State Update the critic using the loss function described in Eq.~\ref{equ:d_dpg_c}
    \State Update decentralized policies using the gradients described in Eq.~\ref{equ:d_dop_g}
    \If{$t\ \text{mod}\  d=0$}
    \State Update target networks: $\phi'=\alpha\phi+(1-\alpha)\phi'$, $\theta'=\alpha\theta+(1-\alpha)\theta'$, $\psi'=\alpha\psi+(1-\alpha)\psi'$
    \EndIf
    \EndFor
	\end{algorithmic}
\end{algorithm}
\section{Related works}\label{sec:}

Cooperative multi-agent reinforcement learning provides a scalable approach to learning collaborative strategies for many challenging tasks~\citep{vinyals2019grandmaster, berner2019dota, samvelyan2019starcraft, jaderberg2019human} and a computational framework to study many problems, including the emergence of tool usage~\citep{baker2020emergent}, communication~\citep{foerster2016learning, sukhbaatar2016learning, lazaridou2017multi, das2019tarmac}, social influence~\citep{jaques2019social}, and inequity aversion~\citep{hughes2018inequity}. Recent work on role-based learning~\citep{wang2020roma} introduces the concept of division of labor into multi-agent learning and grounds MARL into more realistic applications. 

Centralized learning of joint actions can handle coordination problems and avoid non-stationarity. However, the major concern of centralized training is scalability, as the joint action space grows exponentially with the number of agents. The coordination graph~\citep{guestrin2002coordinated, guestrin2002multiagent} is a promising approach to achieve scalable centralized learning, which exploits coordination independencies between agents and decomposes a global reward function into a sum of local terms. \citet{zhang2011coordinated} employ the distributed constraint optimization technique to coordinate distributed learning of joint action-value functions. Sparse cooperative Q-learning~\citep{kok2006collaborative} learns to coordinate the actions of a group of cooperative agents only in the states where such coordination is necessary. These methods require the dependencies between agents to be pre-supplied. To avoid this assumption, value function decomposition methods directly learn centralized but factorized global Q-functions. They implicitly represent the coordination dependencies among agents by the decomposable structure~\citep{sunehag2018value, rashid2018qmix, son2019qtran, wang2020learning}. The stability of multi-agent off-policy learning is a long-standing problem.~\citet{foerster2017stabilising, wang2020understanding} study this problem in value-based methods. In this paper, we focus on how to achieve efficient off-policy policy-based learning. Our work is complementary to previous work based on multi-agent policy gradients, such as those regarding multi-agent multi-task learning~\citep{teh2017distral, omidshafiei2017deep} and multi-agent exploration~\citep{wang2020influence}.

Multi-agent policy gradient algorithms enjoy stable convergence properties compared to value-based methods~\citep{gupta2017cooperative, wang2020understanding} and can extend MARL to continuous control problems. COMA~\citep{foerster2018counterfactual} and MADDPG~\citep{lowe2017multi} propose the paradigm of centralized critic with decentralized actors to deal with the non-stationarity issue while maintaining decentralized execution. PR2~\citep{wen2019probabilistic} and MAAC~\citep{iqbal2019actor} extend the CCDA paradigm by introducing the mechanism of recursive reasoning and attention, respectively. Another line of research focuses on fully decentralized actor-critic learning~\citep{macua2017diff, zhang2018fully, yang2018mean, cassano2018multi, suttle2019multi, zhang2019distributed}. Different from the setting of this paper, agents have local reward functions and full observation of the true state in these works. 

\section{Infrastructure, architecture, and hyperparameters}\label{appx:architecture}
Experiments are carried out on NVIDIA P100 GPUs and with fixed hyper-parameter settings, which are described in the following sections.

\subsection{Stochastic \name}
In stochastic \name, each agent has a neural network to approximate its local utility. The local utility network consists of two 256-dimensional fully-connected layers with ReLU activation. Since the critic is not used when execution, we condition local Q networks on the global state $s$. The output of the local utility networks is $Q_i^{\phi_i}(\bm\tau,\cdot)$ for each possible local action, which are then linearly combined to get an estimate of the global Q value. The weights and bias of the linear combination, $k_i$ and $b$, are generated by linear networks conditioned on the global state $s$. $k_i$ is enforced to be non-negative by applying absolute activation at the last layer. We then divide $k_i$ by $\sum_ik_i$ to scale $k_i$ to $[0,1]$.

The local policy network consists of three layers, a fully-connected layer, followed by a 64 bit GRU, and followed by another fully-connected layer that outputs a probability distribution over local actions. We use ReLU activation after the first fully-connected layer.

For all experiments, we set $\kappa=0.5$ and use an off-policy replay buffer storing the latest $5000$ episodes and an on-policy buffer with a size of $32$. We run $4$ parallel environments to collect data. The optimization of both the critic and actors is conducted using RMSprop with a learning rate of $5\times 10^{-4}$, $\alpha$ of 0.99, and with no momentum or weight decay. For exploration, we use $\epsilon$-greedy with $\epsilon$ annealed linearly from $1.0$ to $0.05$ over $500k$ time steps and kept constant for the rest of the training. Mixed batches consisting of $32$ episodes sampled from the off-policy replay buffer and $16$ episodes sampled from the on-policy buffer are used to train the critic. For training actors, we sample $16$ episodes from the on-policy buffer each time. The framework is trained on fully unrolled episodes. The learning rates for the critic and actors are set to $1\times10^{-4}$ and  $5\times10^{-4}$, respectively. And we use $5$-step decomposed multi-agent tree backup. All experiments on StarCraft II use the default reward and observation settings of the SMAC benchmark.

\subsection{Deterministic \name}
The critic network structure of deterministic \name~is similar to that of stochastic \name, except that local actions are part of the input in deterministic \name. For actors, we use a fully-connected forward network with two 64-dimensional hidden layers with ReLU activation, and the output of actors is a local action. We use an off-policy replay buffer storing the latest $10000$ transitions, from which $1250$ transitions are sampled each time to train the critic and actors. The learning rates of both the critic and actors are set to $5\times10^{-3}$. To reduce variance in the updates of actors, we update the actors and target networks only after 2 updates to the critic, as proposed by~\citet{fujimoto2018addressing}. We also use this technique of delaying policy update in all the baselines. For all the algorithms, we run a single environment to collect data, because we empirically find it more sample efficient than parallel environments in the MPE benchmark. RMSprop with a learning rate of $5\times 10^{-4}$, $\alpha$ of 0.99, and with no momentum or weight decay is used to optimize the critic and actors, which is the same as in stochastic \name.

\end{document}